\documentclass[twoside]{article}
\usepackage[left=2cm,right=2cm,top=2cm,bottom=2cm]{geometry}
\usepackage{amsmath}
\usepackage{amsfonts}
\usepackage{amssymb}
\usepackage{graphicx}
\usepackage{color}
\usepackage[normalem]{ulem}
\usepackage[utf8]{inputenc}
\usepackage{amsfonts}
\usepackage{color}
\usepackage[normalem]{ulem}
\usepackage{nomencl}

\usepackage{algorithm}
\usepackage{algpseudocode}

\makenomenclature

\newenvironment{proof}{\noindent{\sc Proof.}}{\qed}
\newtheorem{theorem}{Theorem}[section]
\newtheorem{lemma}{Lemma}[section]
\newtheorem{cor}{Corollary}[section]
\newtheorem{rem}{Remark}[section]
\newtheorem{definition}{Definition}[section]

\newtheorem{uda}{Example}[section]
\newcommand{\qed}{$\blacksquare$}

\def\bhag#1{\noindent
\setcounter{equation}{0}
\section{#1}
}

\def\RR{{\mathbb R}}

\def\ZZ{{\mathbb Z}}

\def\bs#1{{\boldsymbol{#1}}}



\def\argmin{\mathop{\hbox{\textrm{arg min}}}}

\def\be{\begin{equation}}
\def\ee{\end{equation}}
\def\bea{\begin{eqnarray}}
\def\eea{\end{eqnarray}}
\def\eref#1{(\ref{#1})}

\def\donchitre#1#2{\vskip 6.5cm\noindent
\parbox[t]{1in}{\special{eps:#1.eps x=6.5cm y=5.5cm}}
\hbox to 7cm{}\parbox[t]{0.0cm}{\special{eps:#2.eps x=6.5cm y=5.5cm}}}

\def\tn{|\!|\!|}
\def\XX{{\mathbb X}}

\def\bs#1{{\boldsymbol{#1}}}

\usepackage{dsfont}

\title{A low discrepancy sequence on graphs}
\author{
 A.~Cloninger\thanks{Department of Mathematics and Halicioglu Data Science Institute, University of California San Diego, San Diego, CA 92093, U.S.A..  
 \textsf{email:} acloninger@ucsd.edu},\ \  H.~N.~Mhaskar
 \thanks{
Institute of Mathematical Sciences, Claremont Graduate University, Claremont, CA 91711, U.S.A.. 
\textsf{email:} hrushikesh.mhaskar@cgu.edu} 
  }
\date{}
\begin{document}
\maketitle

\begin{abstract}
Many applications such as election forecasting, environmental monitoring, health policy, and graph based machine learning require taking expectation of functions defined on the vertices of a graph. 
We describe a construction of a sampling scheme analogous to the so called Leja points in complex potential theory that can be proved to give low discrepancy estimates for the approximation of the expected value by the impirical expected value based on these points.
In contrast to classical potential theory where the kernel is fixed and the equilibrium distribution depends upon the kernel, we fix a probability distribution and construct a kernel (which represents the graph structure) for which the equilibrium distribution is the given probability distribution. 
Our estimates do not depend upon the size of the graph.
\end{abstract}

\noindent\textbf{Keywords:} Equal weight quadrature on graphs, potential theory on graphs, density approximation on graphs, Leja points on graphs.

\bhag{Introduction}\label{bhag:introduction}

In many applications, the data is  not represented by points in a high dimensional Euclidean space, but instead as vertices of  networks with pairwise relations.  
A common problem that arises is estimating the mean or some integral of a function on those vertices, under the assumption that the function is  smooth in an appropriate sense with respect to the network.  
For example, prior to elections, polls are used to estimate the opinions of a networked
population about the candidates in expectation \cite{gayo2013meta}.  
In environmental monitoring, estimating average temperature or water quality from a network of sensors allows governments to take preventative action \cite{krause2008efficient}. 
 In health policy, monitoring the average level of health and happiness in various populations requires extensive random polling \cite{sears2014well, cloninger2019people}, which could be vastly downsampled with knowledge of the social networks of the population.  
 Finally, in semi-supervised and active learning using Graph Neural Networks, the generalization error is specified as the average of the loss function associated with all vertices in a graph \cite{kipf2016semi, wu2019active}.

In real world networks, evaluating the function at every vertex, or even at a large number of vertices, is very impractical.  Therefore, an important question is how to choose a smaller subset of the vertices at which to evaluate the function and estimate the desired integral.  Approaches such as Monte-Carlo integration are independent of the network, and do not exploit the assumption that the function is smooth with respect to the network.  
Methods that exploit the geometry of the domain, including potential theory estimates \cite{erd1940uniformly, blatt1992distribution, andrievskii2013discrepancy} and quasi-Monte-Carlo sampling \cite{dick2010digital}, are only defined for intervals and compact vector spaces, and only for particular measures.  

The purpose of this paper is to investigate deterministic constructions of low discrepancy sequences on undirected graphs, where the integration is taken with respect to an arbitrary measure, supported on the entire vertex set of the graph.
We will use potential theory ideas, in particular, study a construction analogous to the so called Leja points.   
In contrast to classical potential theory, we start with a given measure and a graph, and modify the weights on the graph so that the given measure is the equilibrium measure.

\subsection{Related work}\label{bhag:relatedwork}

Let $\Omega$ be any measure space, $\mu$ be a probability measure on $\Omega$, $f$ be a random variable on $\Omega$ with $|f(x)|\le R$ for almost all $x\in\Omega$, $\mathbb{E}_\mu(|f|^2)=V$. 
Then if $\delta\in (0,1)$,  $M$ is sufficiently large, and $x_1,\cdots,x_M$ are random samples from $\Omega$, then it is well known (for example, using Bernstein concentration inequality) that with $\mu$-probability $\ge 1-\delta$,
\be\label{montecarloest}
\left|\frac{1}{M}\sum_{k=1}^M f(x_k)-\int_\Omega f(x)d\mu(x)\right| \le \sqrt{\frac{2(V+R)\log(2/\delta)}{M}}.
\ee
Of course, deterministic variants of this inequality can be obtained using different kinds of deterministic assumptions on $f$. 
For example, an equivalent formulation of a theorem of Erd\H os and Tur\'an  \cite{erd1940uniformly} is the following. Let $\{P_M(z)=\prod_{k=1}^M (z-x_{k,M})\}$ be a sequence of monic polynomials, and $A_M=2^M\max_{z\in [-1,1]}|P_M(z)|$. 
Then for any function $f :[-1,1]\to\RR$ having a bounded total variation $\|f\|_{TV}$ on $[-1,1]$, we have
\be\label{erdosturan}
\left|\frac{1}{M}\sum_{k=1}^M f(x_{k,M})-\frac{1}{\pi}\int_{-1}^1 \frac{f(x)}{\sqrt{1-x^2}}dx\right|\le \frac{8}{\log 3}\sqrt{\frac{\log A_M}{M}}\|f\|_{TV}.
\ee
Theorems of this kind are known as discrepancy theorems. 
Although this estimate cannot be improved in general, 
it was improved by Blatt \cite{blatt1992distribution} with additional assumptions as follows. 
Let the points $\{x_{k,M}\}\subset [-1,1]$, and $2^M|P_M'(x_{k,M})|\ge B_M^{-1}$. 
Then there exists a  contant $c>0$ such that
\be\label{blattdiscr}
\left|\frac{1}{M}\sum_{k=1}^M f(x_{k,M})-\frac{1}{\pi}\int_{-1}^1 \frac{f(x)}{\sqrt{1-x^2}}dx\right|\le c\log\max(A_M,B_M,M)\frac{\log M}{M}\|f\|_{TV}.
\ee 
Both of these estimates depend heavily on potential theory estimates; in particular, the fact that $1/2$ is the logarithmic capacity of $[-1,1]$, and the measure of integration above is the equilibrium distribution for $[-1,1]$. 
There a is large amount of research devoted to generalization of this work including those involving potential theory in higher dimensions. 
In particular, an analogue of \eqref{blattdiscr} in the case of arbitrary measures rather than equilibrium measures is given in \cite{blatt1993general}.
A survey can be found in the book \cite{ andrievskii2013discrepancy} of Andrievskii and Blatt.

The problem is of interest also in the theory of information based complexity where one seeks to approximate an integral over high dimensional spaces with averages of samples of the integrand. 
These have a different flavor, where instead of thinking in terms of zeros of polynomials and potential theory arguments, the interest is in devising quasi-Monte-Carlo systems with low discrepancy; i.e., system of points for which an estimate analogous to \eqref{blattdiscr} holds, especially where the dependence of the constants on dimension are desired to have a polynomial growth with respect to the dimension.
Most of these estimates are in the context of integration of $1$-periodic functions on $[0,1]^q$ with respect to the Lebesgue measure and the total variation is taken in the sense of the so called Hardy-Kraus variation.  
A survey can be found in the book \cite{dick2010digital} of Dick and  Pillichshammer. 
Methods have also been proposed to create a low discrepancy sequence through an accept/reject model for uniform random variables  \cite{dwivedi2019power}.
Existence theorems in the context of general measures and domains are also known in the literature, based mostly on probability theory ideas \cite{tractable, mhaskar2020dimension}.
We note finally that the emphasis here is on approximation of an integral by an \textbf{unweighted average} of the samples of the integrand, not on quadrature formulas where a suitably weighted average of the samples can yield substantially better estimates under various smoothness assumptions on the integrand.

The question of function approximation based on samples of the target function on a graph are well studied, especially in the context of band-limited functions. For example, the papers \cite{anis2016efficient, chen2015discrete} discuss algorithms for obtaining points on a graph with the property that it is possible to reconstruct band-limited functions on the graph exactly using samples at these points. The question is studied from the point of view of compressive sensing in \cite{puy2018random}.
The paper \cite{pesenson2008sampling} presents a detailed study of the space of band-limited functions on a graph and the sets of uniqueness for such functions. 
A key fact that characterizes such sets is that a Marcinkiewicz-Zygmund inequality holds for the space of band-limited functions involved. 
In turn, this leads to quadrature formulas exact to for integration of these spaces. 
In this paper, we deal with signals that are not necessarily band-limited; indeed, the notion of spectral decomposition of the graph Laplacian plays no role in our theory.
On the other hand, our interest is in approximating integrals of functions rather than the functions themselves.

A standard reference on potential theory is the book \cite{landkof1972foundations} by Landkof. 
We are not aware of any prior work specifically for potential theory on graphs. 
However, there are a number of papers dealing with potential theory on locally compact spaces, e.g., \cite{fuglede1960theory, ohtsuka1961potentials,  mhaskar1990weighted}. 
The notion of a sequence of Leja points was introduced in \cite{leja1957certaines} in the context of approximation of the equilibrium measure of a compact subset of the complex plane. 
A detailed analysis of the rate of convergence of the sequence of measures $\sigma_n$ that associates the mass $1/n$ with the first $n$ points in this sequence to the equilibrium measure is given most recently by Pritsker \cite{pritsker2011equidistribution}. 
This notion of discrepancy theorems can be generalized in other, more general contexts. 
A survey can be found in \cite{de2004leja} by De Marche, where computational issues are discussed.
Discrepancy theorems for Leja points in the context of hyper-spheres is analyzed by G\"otz in \cite{gotz2001distribution}.

After the submission of our paper, we came across a paper by Brown \cite{brown2021sequences}, where the author has given a construction of a sequence of good discrepancy points on a graph based on the Green function of a power of the graph Laplacian, where the discrepancy is measured in terms of the Wasserstein metric. 
Naturally, the error in integration is estimated analogously to the Hlawka-Koksma inequality \cite{kuipers2012uniform} in terms of the Lipschitz constant of the function.
In the current paper, we propose a construction independently of the eigen-decomposition of the graph Laplacian, and estimate the discrepancy using the matrix involved in our construction.

There has been recent interest in coreset selection on general domains, including graphs.   A general selection of points that are well distributed on graphs has been found through randomized QR decompositions  \cite{bermanis2013multiscale}, and through random walk sampling \cite{jin2011albatross, nazi2015walk}, however these results do not provide estimates on the error in approximation of an integral.  There has also been significant work on sample selection for full reconstruction of the underlying signal (see \cite{tanaka2020sampling} for a review of methods).
 In \cite{steinerberger2020generalized}, the author provided bounds to guarantee the existence of quadrature formulas (known in the paper as graphical designs), which find exactly the averages of  eigenfunctions of the graph Laplacian corresponding to large eigenvalues.
 In \cite{linderman2020numerical}, the authors provide  bounds for the quadrature error in computing the average of function values based on the values of the function at  arbitrary points {\bf and quadrature weights}. These bounds apply only for spectrally band limited functions. The bounds  depend upon  the spectral band, the $L^2$ norm of the target function, and certain powers of the graph Laplacian applied to the vector of the quadrature weights involved.  Algorithms to choose these points and weights can be found in \cite{lu2020quadrature} for manifolds and \cite{vahidian2020coresets} for a greedy algorithm of point and weight selection on graphs.  

\subsection{Motivating Example}\label{motivatingexample}
As an example, in Figure \ref{fig:qmc_compare} we display a quasi-Monte-Carlo sampling scheme on the unit square and compare it to the various proposed sampling schemes from this paper.  For the quasi-Monte-Carlo sampling, we use a Halton sequence on the unit square, skipping the first 1000 samples and with a leap of 100 \cite{wang2000randomized}. 
For our proposed methods, we begin with 10,000 points uniformly sampled on the unit square $[0,1]^2$, and build a 50-nearest-neighbor graph.  The weighted edges are computed using: a Gaussian weight $A(x_i, x_j) = e^{-\|x_i - x_j\|^2/0.01^2}$, or a log potential weight $A(x_i,x_j)=\log(1/(\|x_i-x_j\|+\varepsilon))$ with $\varepsilon=10^{-5}$.  We select 1000 Leja points using the algorithm and construction proposed in Sections \ref{bhag:lejapts} and \ref{bhag:setup}.  For each graph adjacency type, this is done for two different measures that we wish to integrate against: a uniform distribution on $[0,1]$, or a nonuniform radial measure $\nu^*(x) \propto e^{-\left\|x - c\right\|^2/0.25^2}$ where $c= (0.5 , 0.5)$.


It is clear from this example that the proposed graph Leja points with a uniform distribution recover a similar low discrepancy layout to quasi-Monte-Carlo sampled points.  And beyond this, the graph Leja points are able to generalize to a non-uniform distribution with the same framework.   But the true benefit of this proposed Leja point construction is that the algorithm applies in the case of an arbitrary graph, and will still result in well-spaced points.

\begin{figure}[t]
\centering
\footnotesize
\begin{tabular}{ccc}
\includegraphics[width=.3\textwidth]{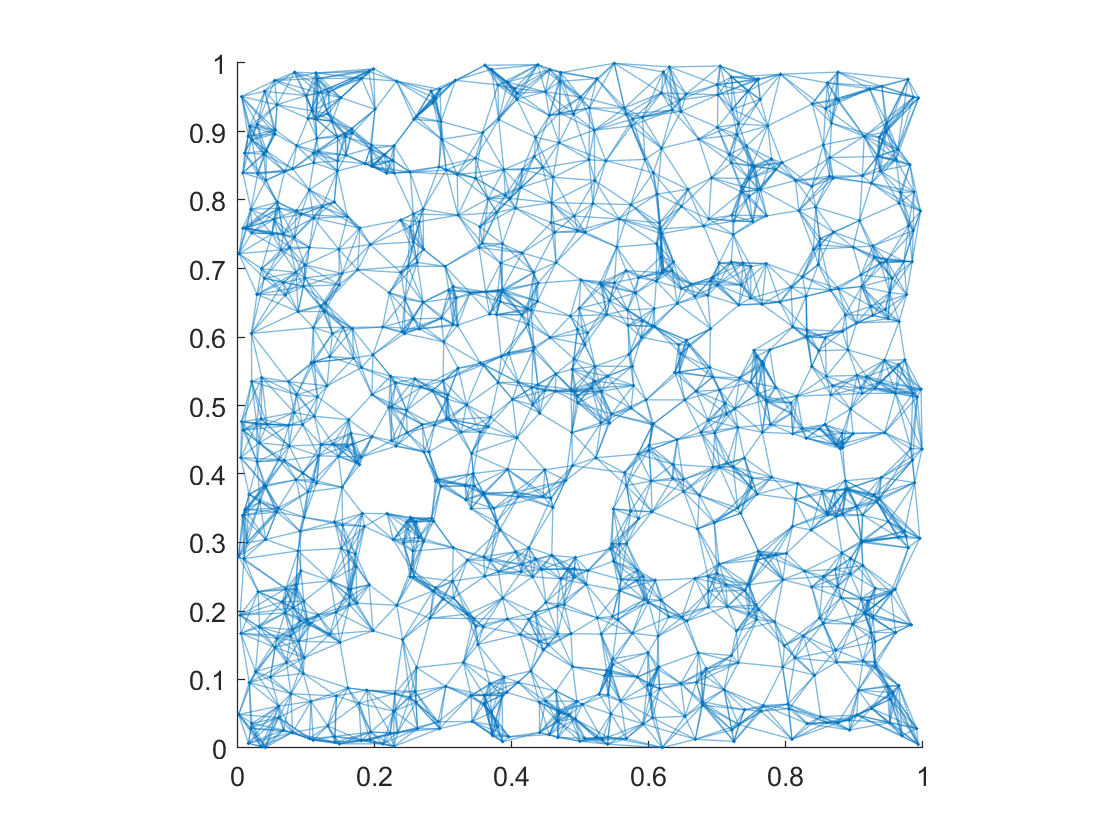} & 
\includegraphics[width=.3\textwidth]{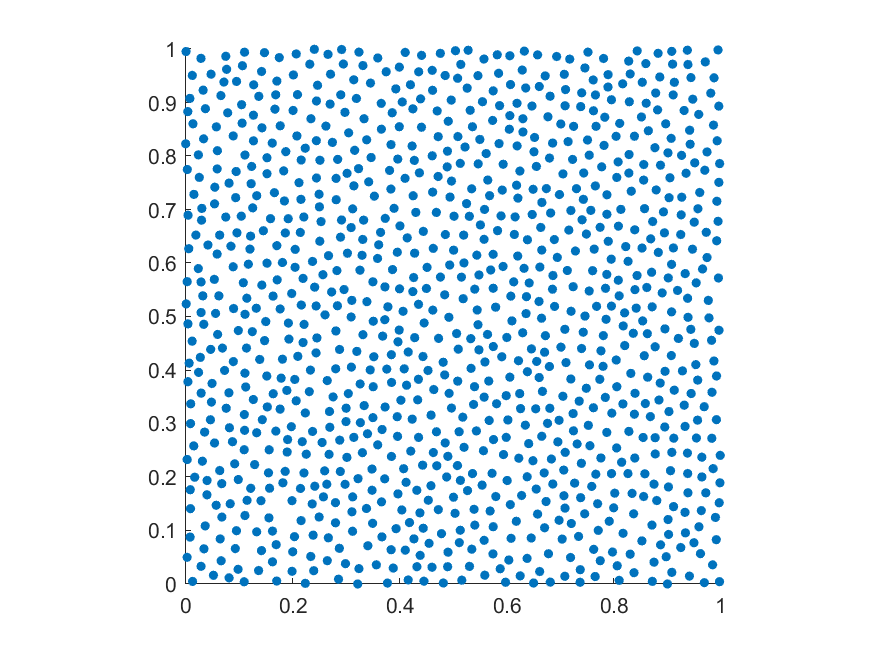} & 
\includegraphics[width=.3\textwidth]{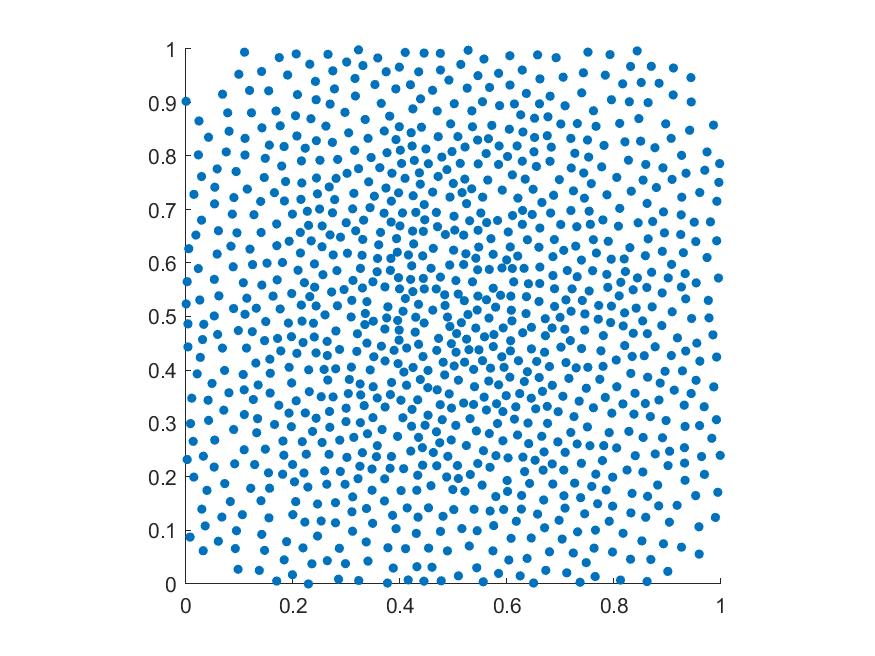} \\
Original graph & Leja Points, Gaussian, Unif. measure &  Leja Points, Gaussian, radial measure
\end{tabular}
\begin{tabular}{ccc}
\includegraphics[width=.3\textwidth]{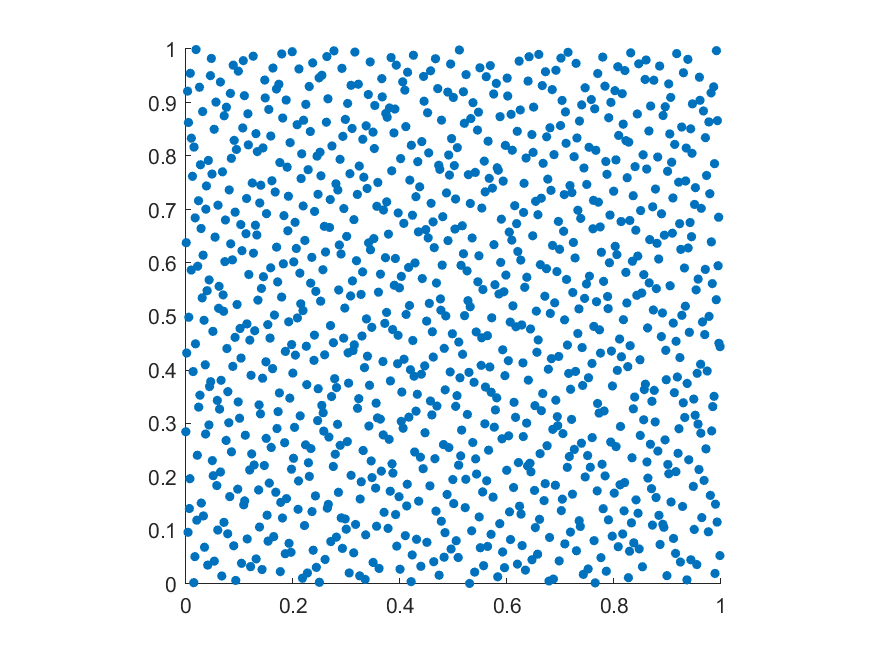} & 
\includegraphics[width=.3\textwidth]{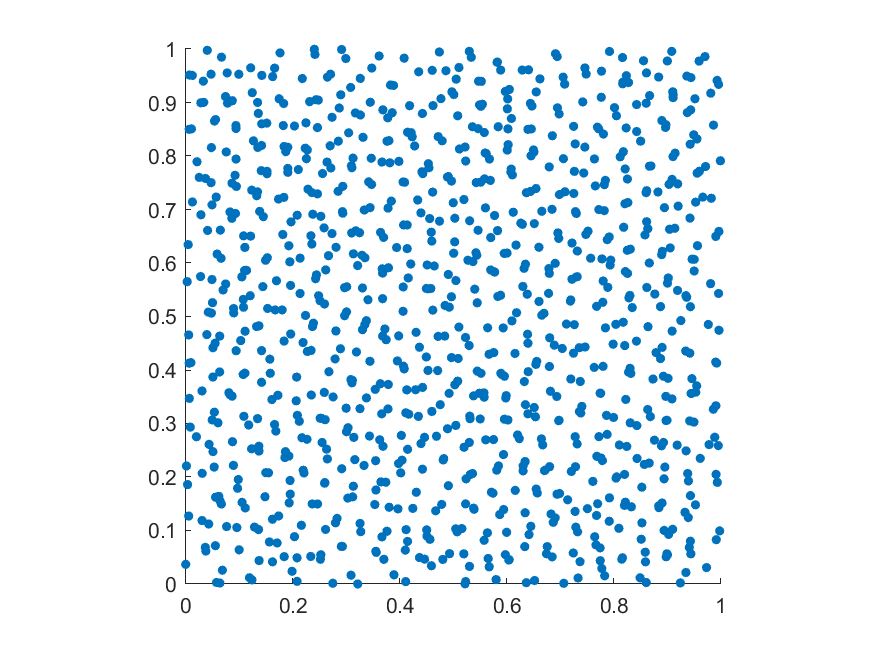} & 
\includegraphics[width=.3\textwidth]{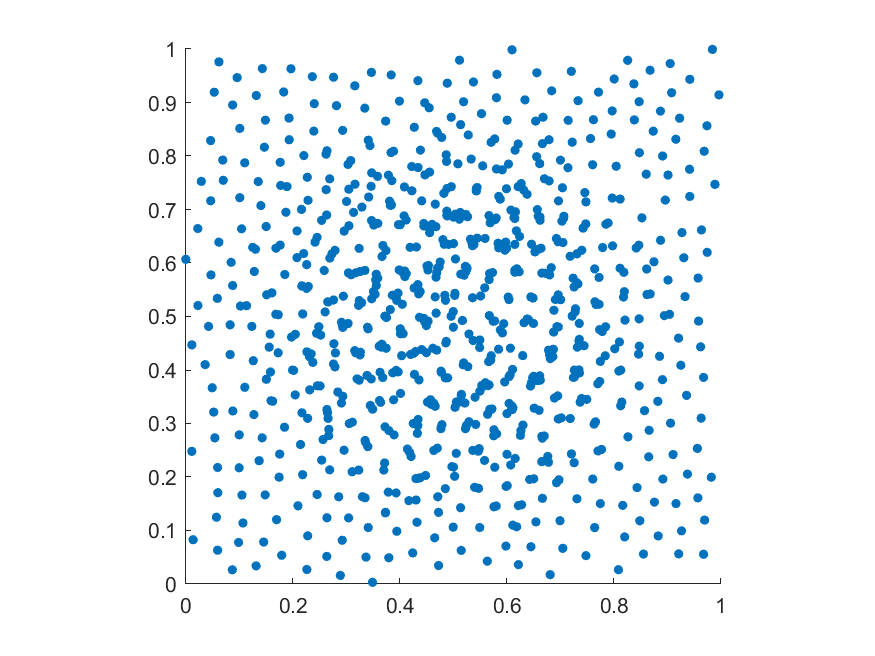}\\
Halton QMC sequence & Leja Points, Potential, Unif. measure & Leja Points, Potential, radial measure
\end{tabular}
\caption{Examples of various sampling schemes on the unit square.  We select 1000 points for sampling.  For Leja points, this is a chosen subset of 10,000 iid uniform distribuiton points.  See Section \ref{motivatingexample} for a full description.  Note, original graph image was computed on smaller number of points/edges for visualization purposes only.}\label{fig:qmc_compare}
\end{figure}


\subsection{Outline of the paper}\label{bhag:outline}
We describe our main theorem in Section~\ref{bhag:main} after reviewing some basic facts from potential theory in the abstract. 
As mentioned earlier, our approach is not to start with a matrix and work with whatever equilibrium measure is associated with it, but rather to start with a graph adjacency matrix and a given probability distribution on the vertices of the graph, and construct another matrix with the same properties as the graph for which this distribution is the equilibrium distribution. 
We describe three such constructions in Section~\ref{bhag:construction}. 
The theory is illustrated with various synthetic and real world examples in Section~\ref{bhag:experiments}. 
The proof of the results in Section~\ref{bhag:main} are given in Section~\ref{bhag:proofs}.

\bhag{Main theorem}\label{bhag:main}
We develop some basic notation in Section~\ref{bhag:notation}, and review a fundamental theorem in potential theory in Section~\ref{bhag:potential}. 
The notion of Leja points is defined in Section~\ref{bhag:lejapts}.  Our main theorem, Theorem~\ref{theo:quadrature} shows that the sequence of Leja points is a low discrepancy sequence for integration with respect to the given measure $\nu^*$.

\subsection{Notation}\label{bhag:notation}
Let $G$ be a $N\times N$ symmetric matrix, $\XX$ be a finite set with $|\XX|=N$. 
We prefer to index $G$ with $\XX\times\XX$.

We consider any function $\nu :\XX\to\RR$ to be a measure on $\XX$ as well as a function on $\XX$, as well as a vector. So, for any function $f:\XX\to\RR$,
\be\label{integralnotation1}
\int fd\nu =\sum_{x\in\XX}f(x)\nu(x)=\int \nu df.
\ee
For measures $\nu, \mu$ on $\XX$, we use the notation
\be\label{integralnotation2}
G(x,\nu)=\int G(x,y)d\nu(y), \qquad G(\mu,\nu)=\int G(x,y)d\mu(x)d\nu(x).
\ee
In particular, $G(x,\nu)$ is the $x$-th component of $G\nu$.
The class of all probability measures on $\XX$ is denoted by $\mathcal{P}$.
The class of all measures $\nu\in\mathcal{P}$ for which $\nu(x)>0$ for all $x\in\XX$ is denoted by $\mathcal{P}_+$.

We denote the vector $(1,\cdots,1)^\intercal$ by $\bs1$. 


A matrix $G$ is conditionally positive semi-definite (c.p.s.d.) if 
$v^\intercal Gv\ge 0$ for all $v$ with $v^\intercal\bs1=0$, and conditionally positive adefinite (c.p.d.) if it is c.p.s.d. and $v^\intercal Gv=0$, $v^\intercal\bs1=0$ together imply that $v=0$.

For a symmetric matrix $A$, we denote
\be\label{rowsumnorm}
M(A) =\max_{x\in\XX} |A(x,x)|, \qquad \tn A\tn =\max_{x\in\XX}\sum_{y\in\XX} |A(x,y)|, \qquad \tn A\tn' =\max_{x\in\XX}\sum_{\substack{y\in\XX\\ y\not=x}} |A(x,y)|.
\ee
For a vector $v$, we denote the $\ell^p$ norm of $v$ by 
$\|v\|_p$, and write $\kappa(v)=\|v\|_\infty/\min_{x\in\XX}|v(x)|$.

\subsection{Potential theory}\label{bhag:potential}
A measure $\nu^*\in\mathcal{P}$ is the equilibrium measure (vector) if
\be\label{capacitydef}
\Gamma(G)=G(\nu^*,\nu^*)=\min_{\nu\in\mathcal{P}} G(\nu,\nu).
\ee
In the context of complex potential theory, the quantity $\Gamma(G)$ is often called Robin's constant. In our context, we refer to the quantity $\Gamma(G)$ as the \emph{capacity of $G$}.

An important characterization of the equilibrium measure is given by the following lemma, known generically as Frostman theorem. 
A proof can be found in almost any book on classical potential theory (e.g., \cite[pp.~136-137]{landkof1972foundations} in the context of potentials on Euclidean spaces and \cite[Theorem~2.1]{ohtsuka1961potentials} in the context of locally compact spaces).
We will reproduce a proof for the sake of completion in Section~\ref{bhag:proofs}.

\begin{lemma}\label{lemma:frostman}
{\rm (a)} 
If $\nu^*$ is any equilibrium measure, and $S^*$ is the support of $\nu^*$, then
\be\label{frostmanlow}
G(x,\nu^*)\ge \Gamma(G), \qquad x\in \XX,
\ee
and
\be\label{frosmaneq}
G(x,\nu^*)=\Gamma(G), \qquad x\in S^*.
\ee
{\rm (b)} Let $G$ be conditionally positive semi-definite.  
If $c\in\RR$, and any measure $\mu\in\mathcal{P}$ with support $S$ satisfies both 
\be\label{tentfrostlow}
G(x,\mu)\ge c, \qquad x\in \XX,
\ee
and
\be\label{tentfrosmaneq}
G(x,\mu)=c, \qquad x\in S.
\ee
 then $c=\Gamma(G)$, and $\mu$ is an equilibrium measure. If $\nu^*$ is any equilibrium measure, then $G(\mu-\nu^*)=0$.
\end{lemma}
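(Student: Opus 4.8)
For part (a) I would run the standard first-variation argument. The minimum in \eqref{capacitydef} is attained since $\mathcal P$ is a compact simplex in $\RR^N$ and $\nu\mapsto G(\nu,\nu)$ is continuous. Given a minimizer $\nu^*$, fix $x\in\XX$ and perturb along $\nu_t=(1-t)\nu^*+t\delta_x\in\mathcal P$, with $\delta_x$ the unit mass at $x$. Expanding,
\be
G(\nu_t,\nu_t)=(1-t)^2\Gamma(G)+2t(1-t)G(x,\nu^*)+t^2G(x,x),
\ee
so the right derivative at $t=0$ is $2(G(x,\nu^*)-\Gamma(G))$; minimality forces it to be $\ge0$, which is \eqref{frostmanlow}. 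Integrating \eqref{frostmanlow} against $\nu^*$ gives $\int(G(x,\nu^*)-\Gamma(G))\,d\nu^*(x)=G(\nu^*,\nu^*)-\Gamma(G)=0$ with a nonnegative integrand, so equality holds on $S^*=\supp\nu^*$ (on a finite set the support is exactly $\{x:\nu^*(x)>0\}$), which is \eqref{frosmaneq}.

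For the first assertion of part (b), I would integrate \eqref{tentfrosmaneq} against $\mu$ (supported on $S$) to get $G(\mu,\mu)=c$. Then for arbitrary $\sigma\in\mathcal P$ put $\tau=\sigma-\mu$, so $\tau^\intercal\bs1=0$, and expand
\be
G(\sigma,\sigma)-G(\mu,\mu)=2G(\mu,\tau)+G(\tau,\tau).
\ee
Here $G(\tau,\tau)\ge0$ because $G$ is c.p.s.d.\ and $\tau^\intercal\bs1=0$, and $G(\mu,\tau)=\int G(x,\mu)\,d\sigma(x)-\int G(x,\mu)\,d\mu(x)\ge c-c=0$ by \eqref{tentfrostlow} and \eqref{tentfrosmaneq}. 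Hence $G(\sigma,\sigma)\ge c$ for every $\sigma\in\mathcal P$, so $\Gamma(G)=c$ and $\mu$ is an equilibrium measure.

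For the last assertion, let $\nu^*$ be any equilibrium measure and $\tau=\mu-\nu^*$ (so $\tau^\intercal\bs1=0$). Expanding, $G(\tau,\tau)=G(\mu,\mu)-2G(\mu,\nu^*)+G(\nu^*,\nu^*)=2\Gamma(G)-2G(\mu,\nu^*)$, while $G(\mu,\nu^*)=\int G(x,\nu^*)\,d\mu(x)\ge\Gamma(G)$ by part (a). Since $G(\tau,\tau)\ge0$ as well, this forces $G(\tau,\tau)=0$ and $G(\mu,\nu^*)=\Gamma(G)$; the latter, being an integral of the nonnegative quantity $G(x,\nu^*)-\Gamma(G)$ against $\mu$, gives $G(x,\nu^*)=\Gamma(G)$ for $x\in S$. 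Finally, from $G(\tau,\tau)=0$ I would conclude $G\tau=0$ as follows: letting $\Pi=I-N^{-1}\bs1\bs1^\intercal$ be the orthogonal projection onto $\bs1^\perp$, the symmetric matrix $\Pi G\Pi$ is positive semidefinite and $\Pi\tau=\tau$, so $\tau^\intercal(\Pi G\Pi)\tau=G(\tau,\tau)=0$ gives $\Pi G\Pi\tau=0$, i.e.\ $G\tau=\alpha\bs1$ for a scalar $\alpha$. Evaluating the identity $G(x,\mu)-G(x,\nu^*)\equiv\alpha$ at any $x\in S$ and using $G(x,\mu)=c=\Gamma(G)$ (from \eqref{tentfrosmaneq}) and $G(x,\nu^*)=\Gamma(G)$ (just shown) yields $\alpha=0$, hence $G(\mu-\nu^*)=G\tau=0$.

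The step I expect to be the crux is this last one: because $G$ is assumed only c.p.s.d.\ rather than c.p.d., $G(\tau,\tau)=0$ does \emph{not} imply $\tau=0$, only that $G\tau$ is a multiple of $\bs1$, and one still needs the Frostman equalities of part (a) together with \eqref{tentfrostlow}--\eqref{tentfrosmaneq} to eliminate the residual constant. Everything else is routine manipulation of the bilinear form $G(\cdot,\cdot)$ and elementary convexity of $\nu\mapsto G(\nu,\nu)$ on $\mathcal P$.
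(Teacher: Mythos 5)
Your proof is correct, and for parts (a) and the first claim of (b) it follows essentially the paper's route: the same first-variation computation gives \eqref{frostmanlow} (the paper establishes \eqref{frosmaneq} via the $\epsilon$-level set $L=\{x: G(x,\nu^*)>\Gamma(G)+\epsilon\}$ rather than your ``nonnegative integrand with zero integral'' phrasing, but on a finite set these are the same argument), and part (b) rests on expanding the quadratic form of a difference of measures and using conditional positive semi-definiteness. Two points of genuine divergence are worth noting. First, for $\Gamma(G)=c$ you show directly that $G(\sigma,\sigma)\ge c$ for \emph{every} $\sigma\in\mathcal{P}$ via $G(\sigma,\sigma)-G(\mu,\mu)=2G(\mu,\sigma-\mu)+G(\sigma-\mu,\sigma-\mu)\ge 0$, whereas the paper compares $\mu$ only against an equilibrium measure $\nu^*$ and squeezes $c=G(\mu,\mu)\le\Gamma(G)\le G(\mu,\mu)$; your variant is marginally more self-contained since it does not invoke the existence of a minimizer at that stage. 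Second, and more substantively: for the final assertion the paper simply states that $G(\nu^*-\mu,\nu^*-\mu)=0$ puts $\mu-\nu^*$ in the null space of $G$. As you correctly observe, conditional positive semi-definiteness only yields, via Cauchy--Schwarz on the subspace $\bs1^\perp$, that $G(\mu-\nu^*)\in\mathrm{span}(\bs1)$; a c.p.s.d.\ matrix such as $\bs1 e_1^\intercal+e_1\bs1^\intercal$ shows the constant need not vanish for an arbitrary null direction of the restricted form. Your extra step --- deriving $G(x,\nu^*)=\Gamma(G)$ on $S$ from $G(\mu,\nu^*)=\Gamma(G)$ and then evaluating $G(x,\mu)-G(x,\nu^*)\equiv\alpha$ at a point of $S$ to force $\alpha=0$ --- closes a gap that the paper's proof elides, and is exactly the right way to do it.
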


\subsection{Leja points}\label{bhag:lejapts}

In this section, we will assume  a symmetric, conditionally positive semi-definite matrix $G$, which has $\nu^*\in \mathcal{P}_+$ as an equilibrium measure.
In particular,
\be\label{frostman}
G(x,\nu^*)=G(\nu^*,\nu^*)=\min_{\nu\in \mathcal{P}_+}G(\nu,\nu)=\min_{\nu\in\mathcal{P}_+}\max_{y\in\mathsf{supp}(\nu)}G(y,\nu)=\Gamma(G). \qquad x\in \XX.
\ee

\begin{definition}\label{def:lejapts}
A sequence $\{a_k\}_{k=0}^\infty$ of points in $\XX$ will be called a \textbf{\textit{Leja sequence}} (with respect to a matrix $G$) if for every $k\ge 1$,
{\rm
\be\label{lejadefrec}
a_k=\argmin_{x\in \XX}\sum_{j=0}^{k-1}G(x,a_j).
\ee
}
The points $a_k$ will be referred to as Leja points.
\end{definition}
\begin{rem}\label{rem:lejadef}
{\rm
The points $a_k$ are in general not distinct. 
In fact, in order for Theorem~\ref{theo:quadrature} to hold, the points will need to be repeated to be commensurate with the measure $\nu^*$. 
Second, a Leja sequence is not uniquely determined by the initial point $a_0$, since the definition does not require the $\argmin$ to be unique.
}
\end{rem}

We will denote the Dirac delta at $a_k$ by $\nu_k$. For $k\ge 1$, let $\sigma_k=(1/k)\sum_{j=0}^{k-1}\nu_k$.
Then
\be\label{lejadefrecbis}
G(\nu_k,\sigma_k)=\min_{x\in\XX}G(x,\sigma_k), \qquad k\in \ZZ_+.
\ee
If $f$ is the range of $G$, we write
\be\label{gderdef}
\mathcal{D}_G(f)=\argmin\{\|w\|_1 : Gw=f\}.
\ee


\begin{uda}\label{uda:compressive}
{\rm 
Let $G$ be any positive definite, $N\times N$ matrix,  $\mathcal{F}$ be the class of all convex combinations of  columns of $G$. 
If $f\in \mathcal{F}$, then $\|\mathcal{D}_G(f)\|_1=1$.
\qed}
\end{uda}

\begin{uda}\label{uda:modifiedgreen}
{\rm
Let $A$ be the weighted adjacency matrix of a connected graph that does not have non-trivial, bipartite connected subgraph, and $\mathcal{L}$ be the graph Laplacian, and  
$\{\lambda_k\}$ be the set of eigenvalues of $\mathcal{G}$, and $\{\phi_k\}$ be the corresponding orthonormalized eigenvectors.  
Then (\cite[Lemma~1.7]{chung1997spectral}) 
$\{\lambda_k\}\subset [0,2)$.
We let $G=I-(1/2)\mathcal{L}$. 
Then $G$ is positive definite with non-negative entries, and positive diagonal, so that Theorem~\ref{theo:quadrature} is applicable, and the equilibrium measure for $G$ is defined uniquely.
In particular, $G$ is invertible.
If $f : V\to\RR$, $f=\sum_k \hat{f}(k)\phi_k$, then
$$
\mathcal{D}_G(f)=2\sum_k \frac{\hat{f}(k)}{2-\lambda_k}\phi_k.
$$
Let $\lambda<2$ and $\Pi_\lambda$ be the class of all $\lambda$-band-limited functions; i.e., the class of all $f :\XX\to\RR$ for which $\hat{f}(k)=0$ if $\lambda_k\ge \lambda$.
For $f\in \Pi_\lambda$, a crude estimate for $\|\mathcal{D}_G(f)\|_1$ is given by
$$
\|\mathcal{D}_G(f)\|_1\le \frac{2}{2-\lambda}\max_{k :\lambda_k<\lambda}\|\phi_k\|_1\sum_{k: \lambda_k<\lambda}|\hat{f}(k)|.
$$
 \qed}
\end{uda}

Our main theorem is the following.

\begin{theorem}\label{theo:quadrature}
Let  $G(x,y)\ge 0$ for all $x,y\in\XX$, $\{a_k\}$ be a Leja sequence with respect to $G$, $\nu^*\in \mathcal{P}_+$ be an equilibrium measure for $G$, and $f$ be in the range of $G$. 
Then
\be\label{quadest}
\left|\int_\XX fd\nu^*-\frac{1}{n}\sum_{k=0}^{n-1}f(a_k)\right|\le \frac{3\tn G\tn}{n+1}\left\|\mathcal{D}_G(f)\right\|_1.
\ee
\end{theorem}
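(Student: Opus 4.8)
The plan is to write $f = Gw$ where $w = \mathcal{D}_G(f)$ is the minimal $\ell^1$-norm solution, so that $\int_\XX f\,d\nu^* = G(\nu^*,w) = \sum_x w(x)\,G(x,\nu^*)$. By Frostman's theorem (Lemma~\ref{lemma:frostman}) applied with $\nu^* \in \mathcal{P}_+$, we have $G(x,\nu^*) = \Gamma(G)$ for \emph{every} $x \in \XX$ (the support is all of $\XX$), hence $\int_\XX f\,d\nu^* = \Gamma(G)\sum_x w(x) = \Gamma(G)\,\ip{w}{\bs1}$. Similarly, the empirical average satisfies $\frac1n\sum_{k=0}^{n-1} f(a_k) = \frac1n\sum_{k=0}^{n-1} G(a_k, w)$, where $G(a_k,w) = \sum_y G(a_k,y) w(y)$. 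The difference we must bound is therefore $\left|\sum_x w(x)\bigl(\Gamma(G) - \frac1n\sum_{k=0}^{n-1} G(a_k,x)\bigr)\right|$, which by Hölder is at most $\|w\|_1 \cdot \max_{x\in\XX}\left|\Gamma(G) - G(x,\sigma_n)\right|$, recalling $\sigma_n = \frac1n\sum_{j=0}^{n-1}\nu_j$ so that $G(x,\sigma_n) = \frac1n\sum_{k=0}^{n-1} G(x,a_k)$. So the theorem reduces to the potential-theoretic estimate
\[
\max_{x\in\XX}\left|G(x,\sigma_n) - \Gamma(G)\right| \le \frac{3\tn G\tn}{n+1}.
\]

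For the lower side, $\min_{x}G(x,\sigma_n) \ge \Gamma(G) - O(\tn G\tn/n)$: here I would use the defining property of Leja points, \eqref{lejadefrecbis}, namely $G(a_n,\sigma_n) = \min_x G(x,\sigma_n)$, together with the recursive relation between $\sigma_{n+1}$ and $\sigma_n$. Writing $(n+1)\sigma_{n+1} = n\sigma_n + \nu_n$ and expanding $G(\sigma_{n+1},\sigma_{n+1})$, one gets a telescoping/monotonicity argument: $G(\sigma_{n+1},\sigma_{n+1})$ relates to $G(\sigma_n,\sigma_n)$ plus a term involving $\min_x G(x,\sigma_n)$ and a diagonal correction of size $O(\tn G\tn/n^2)$ (this is where nonnegativity $G(x,y)\ge0$ and the bound $M(G) \le \tn G\tn$ enter). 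Since $G(\sigma_n,\sigma_n) \ge \Gamma(G)$ always, summing these inequalities forces $\min_x G(x,\sigma_n)$ to be within $O(\tn G\tn/n)$ of $\Gamma(G)$. For the upper side, $\max_x G(x,\sigma_n) \le \Gamma(G) + O(\tn G\tn/n)$: since $\nu^* \in \mathcal{P}_+$ assigns positive mass to every vertex and $G(y,\nu^*)=\Gamma(G)$ for all $y$, one has $G(\sigma_n,\nu^*) = \Gamma(G)$; combined with the lower bound $G(x,\nu^*)\ge\Gamma(G)$ and a Cauchy–Schwarz-type step exploiting conditional positive semi-definiteness (i.e.\ $G(\sigma_n - \nu^*, \sigma_n - \nu^*)\ge 0$ since $\sigma_n - \nu^*$ annihilates $\bs1$), one controls how far $G(x,\sigma_n)$ can exceed $\Gamma(G)$, again picking up the diagonal term $O(\tn G\tn/n)$ from $\|\sigma_n\|_2$-type quantities.

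I expect the main obstacle to be the bookkeeping in the iteration that produces the $1/(n+1)$ rate with the clean constant $3$: one must carefully track the diagonal self-interaction terms $\frac1{n^2}\sum_k G(a_k,a_k)$ arising when $\sigma_n$ interacts with itself, show they sum to something $O(\tn G\tn \log n / n)$ or better and absorb them, and match both the upper and lower deviation bounds into a single symmetric estimate. The conditional positive semi-definiteness is essential precisely for the upper bound (the lower bound is more combinatorial, coming from the greedy/Leja choice), and reconciling the two requires that $\Gamma(G)$ genuinely be the min–max value, which is \eqref{frostman}. Once the displayed potential estimate is in hand, the theorem follows immediately by the Hölder step above.
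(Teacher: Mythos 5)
Your reduction of \eqref{quadest} to the uniform potential estimate $\max_{x\in\XX}|G(x,\sigma_n)-\Gamma(G)|\le 3\tn G\tn/(n+1)$ is exactly the paper's proof of Theorem~\ref{theo:quadrature}: write $f=Gw$ with $w=\mathcal{D}_G(f)$, observe that the quadrature error equals $\int_\XX\{G(x,\nu^*)-G(x,\sigma_n)\}\,dw(x)$, and apply H\"older together with $G(x,\nu^*)=\Gamma(G)$ for every $x$ (valid because $\nu^*\in\mathcal{P}_+$ has full support). That step is correct, and it is all the paper itself does at this point; the substance lives in the potential estimate, which the paper isolates as Theorem~\ref{theo:erdosturanimproved}.

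The gap is that you only sketch that estimate, and the one concrete mechanism you propose for the upper deviation does not deliver the rate. Writing $G(x,\sigma_n)-\Gamma(G)=G(\delta_x-\nu^*,\sigma_n-\nu^*)$ (with $\delta_x$ the Dirac mass at $x$) and applying Cauchy--Schwarz for the conditionally positive semi-definite form, together with the energy decay $G(\sigma_n-\nu^*,\sigma_n-\nu^*)\le (M-\Gamma(G))/n$ from Theorem~\ref{theo:lejaconverge}, yields only a bound of order $\tn G\tn\,n^{-1/2}$, since Cauchy--Schwarz produces the square root of the energy; no bookkeeping recovers $O(n^{-1})$ along this route. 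The paper's treatment of the upper deviation is instead entirely elementary and does not use conditional positive semi-definiteness at all: since $G\ge 0$ entrywise, one has $0\le G(x,\sigma_n)\le 2\tn G\tn/n$ by \eqref{pf4eqn2}, while $G(x,\nu^*)=\Gamma(G)\ge 0$, so $G(x,\sigma_n)-G(x,\nu^*)\le 2\tn G\tn/n\le 3\tn G\tn/(n+1)$ for $n\ge 2$. For the lower deviation your instinct (greedy property plus telescoping) is the right one, but the argument is more specific than a generic monotonicity claim: one first proves the near-monotonicity $G(\nu_{\ell-1},\sigma_{\ell-1})-G(\nu_\ell,\sigma_\ell)\le\tn G\tn\bigl(\tfrac{1}{\ell-1}-\tfrac{1}{\ell}\bigr)$, and then combines it with the Ces\`aro-type lower bound \eqref{lejasummability} on $\frac{2}{n(n-1)}\sum_k kG(\nu_k,\sigma_k)$ --- which is where the Leja minimality and $G(\cdot,\nu^*)\equiv\Gamma(G)$ actually enter --- to pin down the last term $G(\nu_n,\sigma_n)=\min_{x}G(x,\sigma_n)$. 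As written, your proposal establishes the easy reduction but not the theorem.
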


\begin{rem}\label{rem:bandlimited}
{\rm
In contrast to the estimates given in \cite{linderman2020numerical, vahidian2020coresets}, our estimates do not require the function $f$ to be band-limited. 
\qed}
\end{rem}
\begin{rem}\label{rem:quadrem}
{\rm
We note that the rate of  convergence of the equal weight quadrature formulas is much faster than what is expected for quasi-Monte Carlo methods, demonstrating that the sequence of Leja points is a low discrepancy sequence in the sense of \cite{dick2010digital}.
It is not clear what conditions on the matrix $G$ will ensure an equilibrium measure in $\mathcal{P}_+$.
In Section~\ref{bhag:construction}, we will give a variety of possible constructions to modify either an arbitrary matrix or a matrix with certain properties to ensure that any given vector $\nu^*\in \mathcal{P}_+$ is an equilibrium measure of the resulting matrix.
Thus, rather than taking the viewpoint that $G$ is given and $\nu^*$ is its equilibrium measure, we will start with $\nu^*$ and construct $G$ with $\nu^*$ as the equilibrium measure. 
Then the Leja sequence depends upon $\nu^*$ via $G$.
\qed
}
\end{rem}

The proof of Theorem~\ref{theo:quadrature} mimics the standard proof of the convergence of the measures $\nu_n$ in classical potential theory. 
Thus, we note that the mapping $(\mu,\nu)\to G(\mu,\nu)$ is a semi-inner product on the space of all measures on $\XX$. 
We prove first that the sequence $\nu_n$ converges to $\nu^*$ in the sense of the semi-norm defined by this semi-inner product. 
This implies in turn that  the Ces\'aro means $(C,2)$ of the sequence $\nu_n$ converges to $\nu^*$. 
A tauberian argument then completes the proof.
Of course, we need to keep track of the rates of convergence at each stage.

\bhag{Constructions}\label{bhag:construction}

Let $B$ be any symmetric matrix, and $v\in\mathcal{P}_+$.
We want to construct a matrix $G$ such that $G$ is conditionally positive semi-definite,  and $Gv=c\bs1$, so that $v$ is the equilibrium vector for $G$.
We describe three constructions.

\subsection{Diagonal modification}\label{bhag:diagmod}
This construction gives a modified graph Laplacian, and works with any symmetric matrix $B$ with no further assumptions. First, we construct a symmetric matrix $B_1$ so that $B_1v=0$.  \\

We define the \textbf{\textit{$v$-Laplacian}} $L_v(B)$ by
\be\label{straightb1}
w(x)=\frac{(Bv)(x)}{v(x)}, \qquad W=\mathsf{diag}(w(x)), \qquad L_v(B)=W-B.
\ee
Then with $B_1=L_v(B)$,
\be\label{straightnormest}
 \tn B_1\tn \le (\kappa(v)+1)\tn B\tn.
\ee

Let $V=\mathsf{diag}(v(x)/\|v\|_\infty)$, and 
\be\label{gdef}
G=2\tn B_1\tn V^{-1}-B_1.
\ee 
Then
$$
G(x,x)-\sum_{y\not=x}|G(x,y)| \ge 2\|v\|_\infty\tn B_1\tn/v(x) -\tn B_1\tn \ge \tn B_1\tn >0.
$$
So, $G$ is diagonal dominant, hence, positive definite, and
\be\label{gequil}
Gv=2\|v\|_\infty\tn B_1\tn \bs1^\intercal,
\ee 
so that $v$ is the unique equilibrium measure of $G$ and 
\be\label{gcapacity}
\Gamma(G)=2\|v\|_\infty\tn B_1\tn.
\ee
Moreover,
\be\label{b2normest}
\tn G\tn/2\le \max_{x,y\in \XX}|G(x,y)|\le \tn G\tn \le (2\kappa(v)+1)\tn B_1\tn.
\ee

\begin{rem}\label{rem:vlaplacian}
{\rm
The construction of the $v$-Laplacian works for \emph{all} symmetric matrices $B$ with no further assumptions. 
In the case when $B$ is the (weighted) adjacency matrix with non-negative weights, then with $V=\mathsf{diag}(v(x))$, $VBV$ is another adjacency matrix for the same graph so that $B(x,y)=0$ if and only if $(VBV)(x,y)=0$.
The matrix $L(VBV)=VL_v(B)V$ is the non-normalized graph Laplacian for $VBV$. 
Thus, the eigenvalues of $L_v$ are the same as those of $L(VBV)$, and if $\phi$ is an eigenvector of $L(VBV)$, then the eigenvector for $L_v$ for the same eigenvalue is $V\phi$.
In particular, $L_v(B)$ is positive semi-definite, and the graph is connected if and only if $v$ is the unique null vector for $W-B$.

If $B$ is the weighted adjacency matrix of a connected graph (with positive weights) and $B_1$ is the $v$-Laplacian defined in \eref{straightb1}, the matrix $G$, defined by \eqref{gdef},  has the same graph structure as $B$ except for self-loops, and $G(x,y)\ge 0$ for all $x,y\in\XX$. 
\qed
}
\end{rem}

\subsection{Householder transform}\label{bhag:householder}

In the case when $B$ is not invertible, there is another way to construct $B_1$, which essentially preserves the eigenstructure of $B$ itself (rather than a graph Laplacian for $B$). 
This construction is also applicable for every non-invertible $B$ which may have negative entries.
If $u$ and $w$ are unit vectors, the Householder transform is defined by
\be\label{householder}
H[w,u]=I-2\frac{(w-u)(w-u)^\intercal}{\|w-u\|_2^2}.
\ee
Clearly, $H[w,u]$ is a symmetric unitary matrix, $H[w,u](w)=u$, and $H[w,u]=H[u,w]$, so that $H[w,u](u)=w$. 
We have
\be\label{hausholdernorm}
\tn H[w,u]\tn \le 1+2\frac{\|w-u\|_\infty\|w-u\|_1}{\|w-u\|_2^2}\le 1+2\kappa(w-u).
\ee

Since $B$ is not invertible,  there is a unit vector $u$ such that $Bu=0$. 
Writing $\tilde{v}$ for the unit vector along $v$, we set 
\be\label{houseb1}
B_1=H[u,\tilde{v}]BH[u,\tilde{v}].
\ee
 Clearly, $B_1v=0$,
\be\label{housenormest}
 \tn B_1\tn \le \tn H[u,\tilde{v}]\tn^2\tn B\tn.
\ee
The matrix $B_1$ has the same eigenvalues as $B$, and if $B=U\Lambda U^\intercal$ is the spectral decomposition of $B$, then the eigenvectors of $B$ are obtained simply by replacing every eigenvector $u_j$ of $B$ by $Hu_j$. 
However, this construction depends upon $B$ having a null vector $u$ and our ability of compute $H[u,\tilde{v}]$, and the norms will depend upon the norms of this Householder transform. 

With the matrix $B_1$ defined in \eqref{houseb1}, we construct $G$ as in \eqref{gdef}.

\subsection{Symmetric scaling}\label{bhag:slinkhorn}
This construction applies only to symmetric, \textbf{non-negative} matrices $B$, and results in a matrix with the same eigenvalues as those of $B$.
For a matrix $B$, and subsets $S_1,S_2\subseteq\XX$, we denote by $B[S_1,S_2]$ the sub-matrix of $B$ obtained by extracting rows of $A$ indexed by $S_1$ and columns of $B$ indexed by $S_2$. 

Let $B$ be a non-negative, symmetric matrix, $v\in\mathcal{P}_+$ satisfy the Brualdi condition\\
\emph{For all partitions $S_1, S_2, S_3$ of $\XX$ such that $B[S_2\cup S_3, S_3]=0$, we have
\be\label{brualdicond}
\sum_{x\in S_1}v(x) \ge \sum_{y\in S_3}v(y),
\ee 
with equality holding if and only if $A[S_1,S_1\cup S_2]=0$.
}\\

Then a theorem of Brualdi \cite{brualdi1974dad} states that there exists a diagonal matrix $D$ with positive entries such that
\be\label{brualdi_scaling}
DBD\bs 1=v.
\ee 
Let $V$ be the diagonal matrix $(v(x))$. 
Then $G=V^{-1}DBDV^{-1}$ satisfies
\be\label{brualdi_equilibrium}
Gv=\bs 1.
\ee
Clearly, $G$ represents a graph with the same vertices and edges as $B$, except for different edge weights.
We note that when $B$ has a positive diagonal, then the Brualdi condition is satisfied vacuously for every $v\in\mathcal{P}_+$.  The full algorithmic steps for finding such a bistochastic normalization of symmetric  matrices can be found in \cite{marshall2019manifold, landa2021doubly}.

\bhag{Experiments}\label{bhag:experiments}

\subsection{Set up}\label{bhag:setup}
One matrix $B$ that is trivially row-scalable as in \eqref{brualdi_scaling} is the matrix of all positive entries; i.e., the graph is complete with self-loops at each vertex.
Considering a complete matrix prevents a possible blow-up of $D$ on singleton vertices, where otherwise a singleton vertex $x$ would have $D(x,x) = (B(x,x) v(x))^{-1/2}$.
On the other hand, it is more expensive to apply the Sinkhorn algorithm \cite{sinkhorn1967concerning} to a dense matrix $B$   than applying the algorithm to a sparse matrix $B$, since the most expensive part of the algorithm is the matrix vector multiplications.  

A sparse matrix $B$ can be augmented to have all positive entries in a style similar to Pagerank \cite{page1999pagerank} by adding a small weighted edge between any pair of vertices,
\be\label{pagerank_construction}
\widetilde{B} = (1-\alpha) B + \frac{\alpha}{N} \bs{1}\bs{1}^\intercal.
\ee
Then $\widetilde{B}$ trivially satisfies the assumptions in Section~\ref{bhag:slinkhorn} since it has a positive diagonal.
Moreover, because of the particular structure of $\widetilde{B}$, it is possible to compute a matrix vector multiplication in a time that depends only on the sparsity of $B$.  
This is done using the trivial observation that
\be\label{fastpagerank}
\widetilde{B}v = (1-\alpha) Bv + \frac{\alpha}{N} \bs{1} (\bs{1}^\intercal v )= (1-\alpha) Bv + \frac{\alpha\cdot \sum_x v(x)}{N} \bs{1}. 
\ee
This observation of the Pagerank modification was originally made in \cite{knight2008sinkhorn}.

Similarly,  the matrix $G$ used in \eqref{lejadefrec} can be constructed from the sparse matrix $B$ by observing that
\be\label{sparseG}
G=V^{-1}D\widetilde{B}DV^{-1}= (1-\alpha)V^{-1}DBDV^{-1}  + \frac{\alpha}{N} (D {\nu^*}^{-1} ) (D {\nu^*}^{-1})^\intercal,
\ee
where $V=\mathsf{diag}(\nu^*(x))$.

The entire algorithm for constructing the Leja points  $\{a_j\}$ of the Pagerank graph in Eq. \eqref{pagerank_construction} is shown in Algorithm \ref{algo:leja} for completeness.   
The computational complexity of the algorithm is mostly rooted in the Sinkhorn iterations for computing $G$.  Let $B$ be an $N\times N$ matrix with $k$ edges per node (the number of edges need not be fixed, but simplifies the complexity calculation).  
Then each matvec operation $Bv$ requires $O(Nk)$ flops, so computation of $\widetilde{B}v$ also requires $O(Nk)$ flops.  
Exact bounds on the number$L$  of Sinkhorn iterationsis not known, but we can still denote the computational complexity as $O(NkL)$.  Computing $G$ similarly requires $O(Nk)$ flops and can be held in memory using sparse storage using \eqref{sparseG}.  Finally, computing the Leja points requires summing up to $n$ columns of $G$, which has computational complexity $O(nk)$.  This gives a total complexity of $O(NkL) + O(nk)$. 

\begin{algorithm}
    \caption{Graph Leja Point Selection}
    \label{algo:leja}
    \hspace*{\algorithmicindent} \textbf{Input:} Sparse matrix $B$, Pagerank factor $\alpha$, Base measure $v$, Number Leja points $n$, Queryable function $f$ \\
    \hspace*{\algorithmicindent} \textbf{Output:} Integral estimate $\widehat{\mathbb{E}_\nu}[f]$
    \begin{algorithmic}[1]
    \State $\widetilde{B} = (1-\alpha)B + \frac{\alpha}{N} \bs{1}\bs{1}^\intercal$ \Comment{Notational only, the full ones matrix need not be stored in memory (see \eqref{fastpagerank})}
    \State Sinkhorn iterations to find $D$ such that $D\widetilde{B} D \bs{1} = v$
    \State $G = V^{-1}D\widetilde{B}D V^{-1}$
    \State $a_0 = \textnormal{random vertex } x\in \XX$
    \For{$k$ from $1$ to $n-1$}
    	\State $a_k=\argmin\limits_{x\in \XX}\sum_{j=0}^{k-1}G(x,a_j)$
    \EndFor
    \State $\widehat{\mathbb{E}_\nu}[f] = \frac{1}{n}\sum_{k=0}^{n-1}f(a_k)$
    \end{algorithmic}
\end{algorithm}

In all the experiments below, we compare three quantities:
\begin{itemize}
\item the average $\frac{1}{n}\sum_{k=0}^{n-1}f(x_k)$, where $x_k$ is sampled randomly (with replacement) according to a weighted probability $\nu^*(x_k)$.  This is denoted ``Random'' and with blue curves in the experiments.
\item the average $\frac{1}{n}\sum_{k=0}^{n-1}f(a_k)$, where $a_k$ are the Leja points of the graph with equilibrium measure $\nu^*$.  This is denoted ``Leja'' and with red curves in the experiments.
\item the average $\frac{2}{n(n-1)}\sum_{k=0}^{n-1}(n-k)f(a_k)$ (cf. \eref{potentialbd}), where the $a_k$ are the Leja points of the graph with equilibrium measure $\nu^*$.  This quantity is considered as a natural weighting of the Leja points that respects the order in which they are drawn.  This is denoted ``Summability'' and with orange curves in the experiments.

\end{itemize}

We also note for all examples to follow in this section, that because both the Leja point selection algorithm and the Monte Carlo sampling are with replacement, it is possible to compute $n>N$ iterations without selecting all $N$ vertices. 
 This additional sampling can be seen as establishing weights, as a point sampled twice is given twice the weight as a point sampled once.  
Additionally, we will use the Pagerank kernel $G$ as in \eqref{pagerank_construction} with $\alpha=0.05$.
 
 Finally, unless otherwise noted,we set $\nu^*$ to be the inverse of the kernel density estimate for the graph,
\be
\nu^*(x) \propto \frac{1}{\sum_y A(x,y)},
\ee
normalized so that $\|\nu^*\|_1 = 1$.    This is an arbitrary choice of $\nu^*$, as the only required condition is for $\nu^*(x)>0$, and is meant to demonstrate that the results apply to more than $\nu^* = \frac{1}{N}\mathbf{1}$. 
This equilibrium measure is of particular interest because, in the point cloud kNN graph setting, it approximately cancels out the sampling density of the points themselves.  More specifically, if the points are sampled from the density $p:\mathbb{R}^d\rightarrow \mathbb{R}_+$ that has comptact supp$(p)$, then 
\be
\mathbb{E}_{\nu^*}[f] = \int_{\textnormal{supp}(p)} f(x) p(x) d\nu^*(x) \approx c_p \int_{\textnormal{supp}(p)} f(x) dx,
\ee
which would be independent of the relative heights of the sampling density.   However, any positive density $\nu^*$ satisfies the necessary assumptions for the proven approximation rates.

\subsection{Synthetic Graphs with Smooth Functions}

A common model for social networks is a so-called \emph{small world network} \cite{watts1998collective}.  Small world networks are graphs with a small number of edges per vertex, but where any pair of vertices is likely to have a small graph distance.  Mathematically, if the average graph distance between two vertices is $d$, a small-world graph with $N$ vertices roughly satisfies $d\propto \log(N)$.  These networks are antithetical to \emph{nearest neighbor graphs} that are generated from a point cloud in some metric space, and have a large network diameter.

\begin{figure}
\centering
\footnotesize
\begin{tabular}{cc}
\includegraphics[width=.4\textwidth]{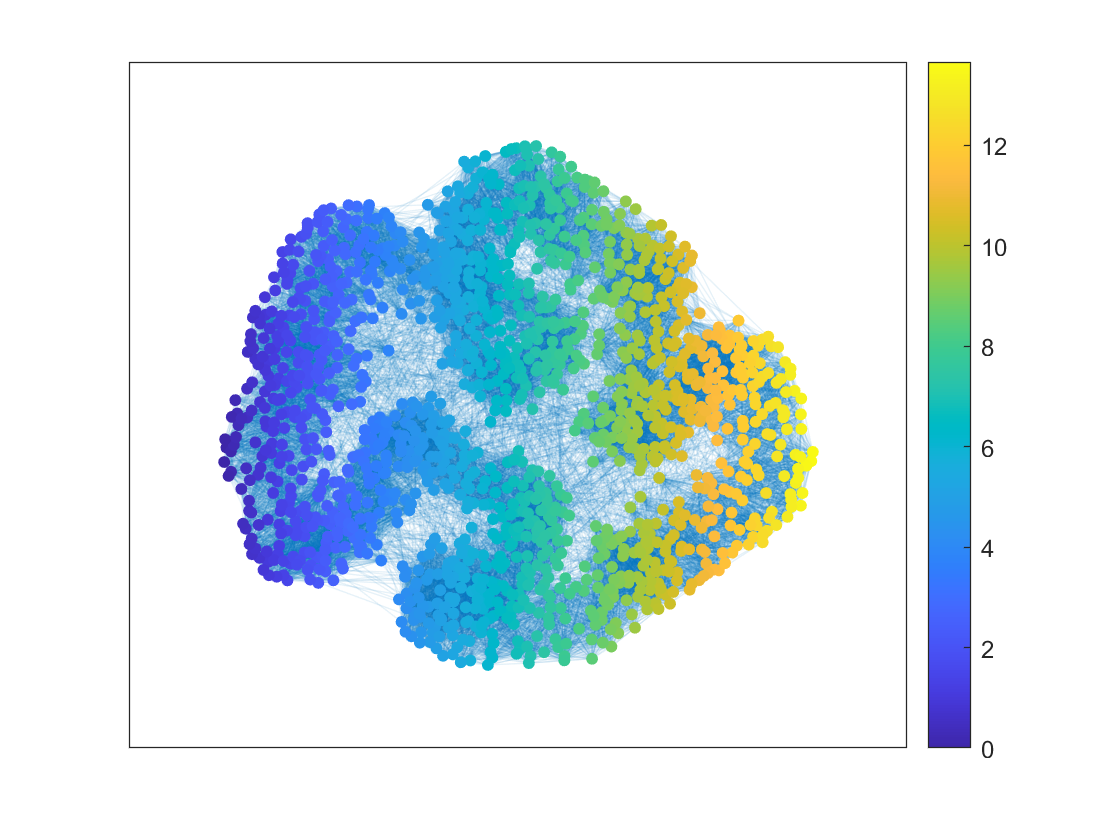} & 
\includegraphics[width=.4\textwidth]{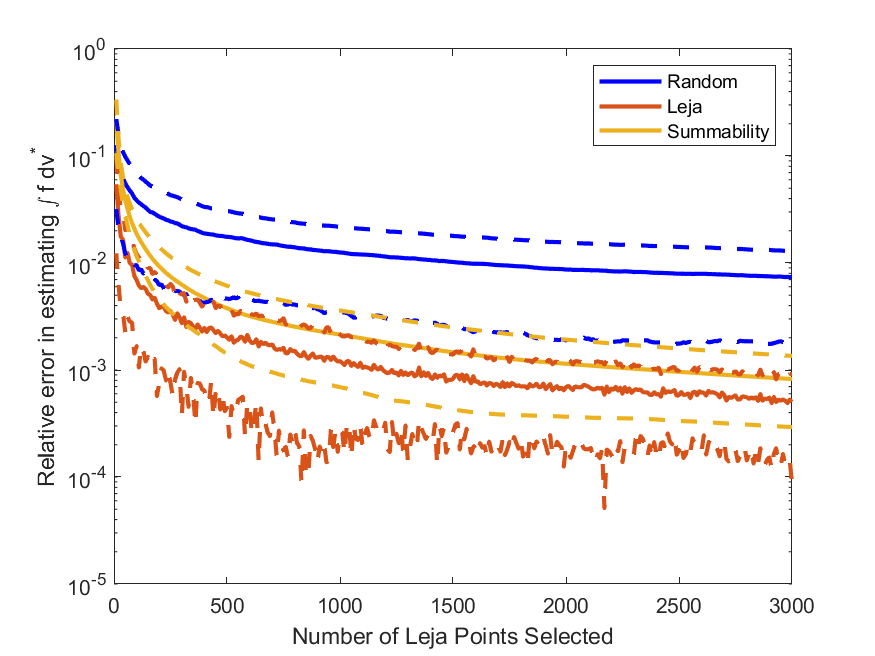} \\
(a) & (b) \\
\includegraphics[width=.4\textwidth]{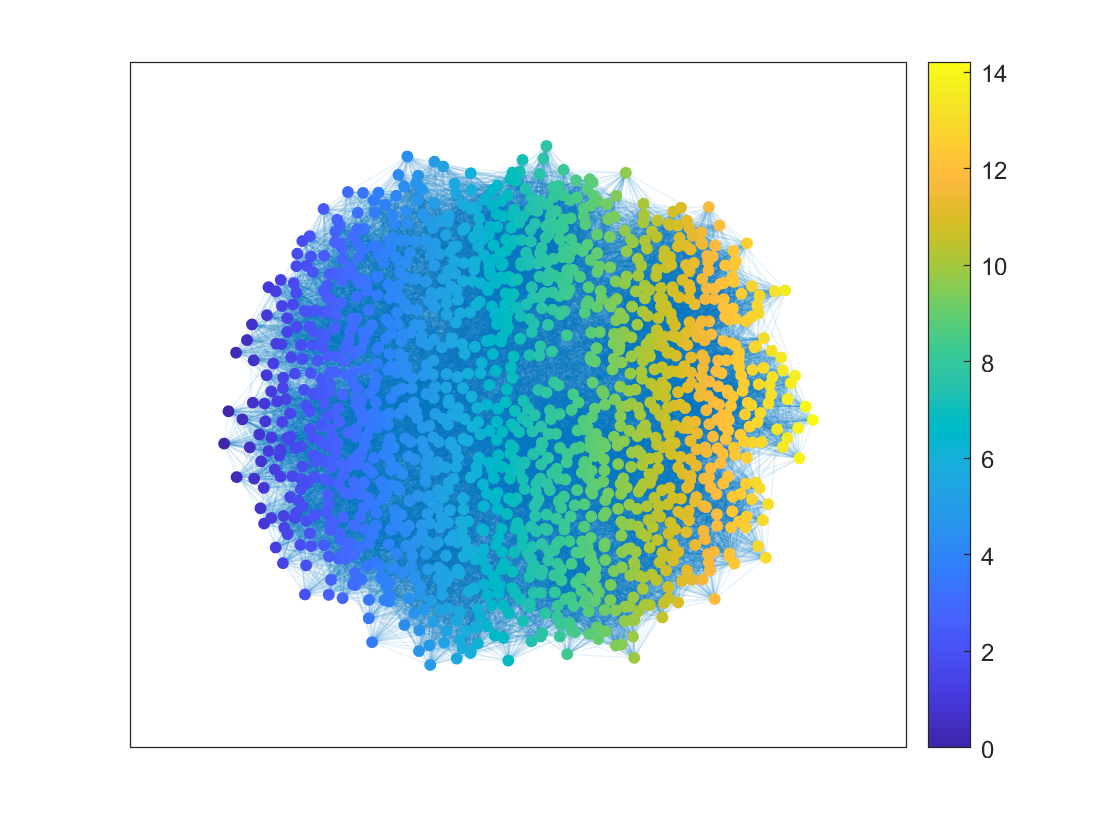} & 
\includegraphics[width=.4\textwidth]{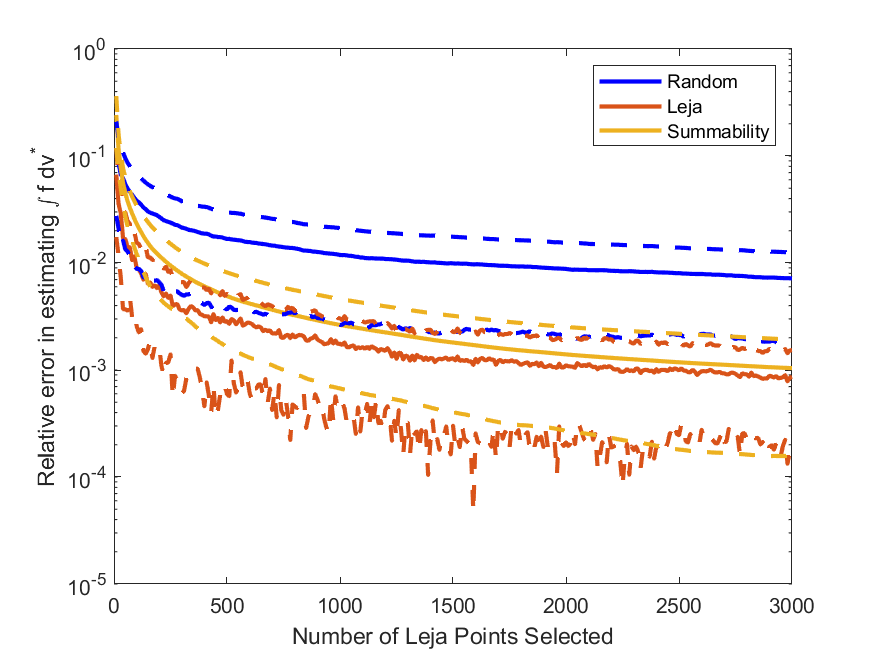} \\
 (c)  & (d)
\end{tabular}
\caption{Watts-Stogatz models for (a)-(b) $\beta=.05$, and (c)-(d) $\beta=.25$.
We display both an example of the graph/function, and the quadrature error results over 1000 instantiations of the random graph.  The dotted lines correspond to the confidence intervals around each mean curve of the same color.}\label{fig:watts}
\end{figure}

In the first set of experiments, we examine the quadrature approximation for a smooth function on a Watts-Strogatz model \cite{watts1998collective}. 
This is a model of random graph generation that exhibits small-world properties, including low path distances between vertices while still exhibiting a high degree of clustering.  The model takes inputs of number of vertices, average number of edges per vertex, and a parameter $\beta\in [0,1]$ that models the fraction of random connections.  When $\beta=0$ the generated graphs will exhibit lattice structure, and when $\beta=1$ the graph will resemble an Erd\H os-R\'enyi random graph.

In this experiment, we take $f$ to be the $x-$coordinate of the vertex locations after applying a force layout \cite{fructerman1991force}.  This results in a smooth function in the nodes, since a force layout is a heuristic algorithm that repels unconnected nodes away from one another.  The functions are displayed in Figure \ref{fig:watts}, and the graphs are constructed for various values of the parameter $\beta$, which measures the level of structure in the graph.  Also, because the graph can be regenerated, we run the experiment for 1000 graphs with the same parameters.  We display the mean and standard deviation across these 1000 instantiations of the graph.  For each graph, we take $N=1500$ vertices, and an average of 25 edges per node.


In a second set of experiments, we construct a nearest neighbor graph from the point cloud in Figure \ref{fig:twocluster}.  
The interesting aspect of this data set is that the points from the two clusters are sampled in a highly non-uniform manner.  
Because $\nu^*$ is constructed by approximating the inverse of the local density, this results in a similar number of points sampled from both clusters, see Figure \ref{fig:leja_synthetic}.    
The function is taken to  be the $x$-coordinate of the points.  Again, because the graph can be regenerated, we run the experiment for 1000 graphs with the same parameters.  We display the mean and standard deviation across these 1000 instantiations of the graph.  For each graph, we take $N=1500$ nodes and use a bandwidth for the Gaussian kernel of $\sigma=.1$.

\begin{figure}
\centering
\begin{tabular}{cc}
\includegraphics[width=.4\textwidth]{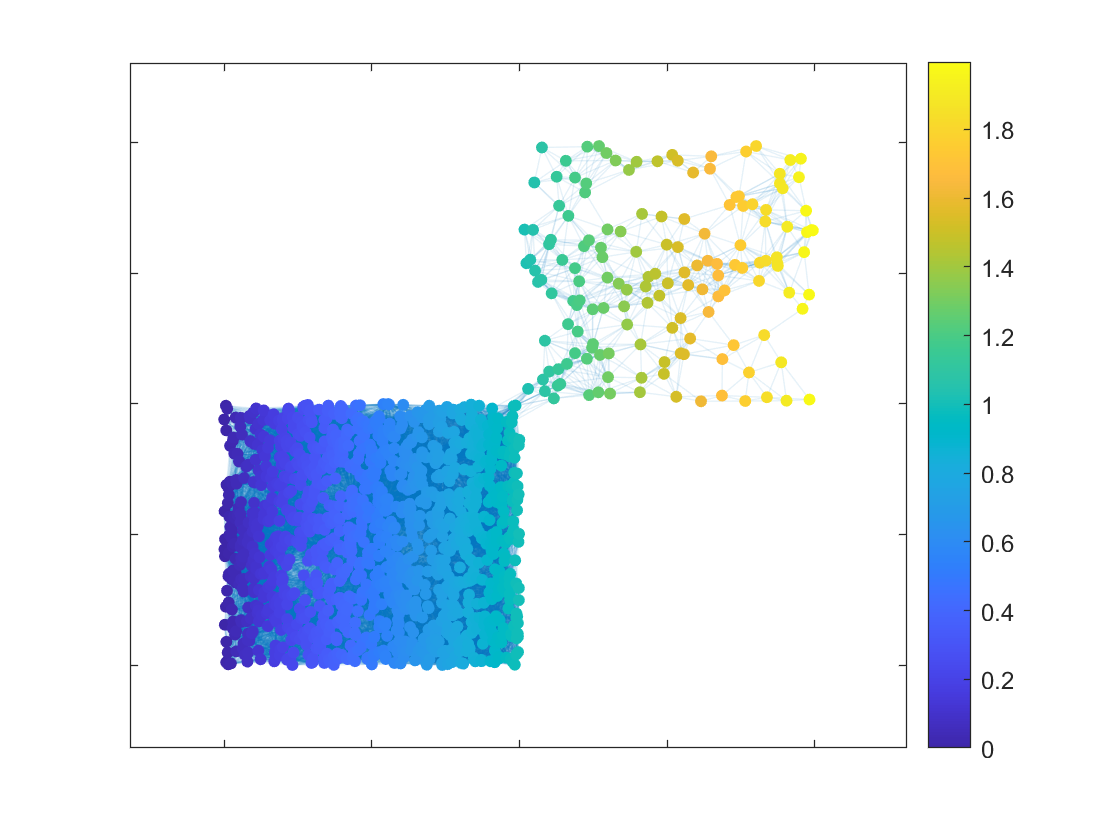} & 
\includegraphics[width=.4\textwidth]{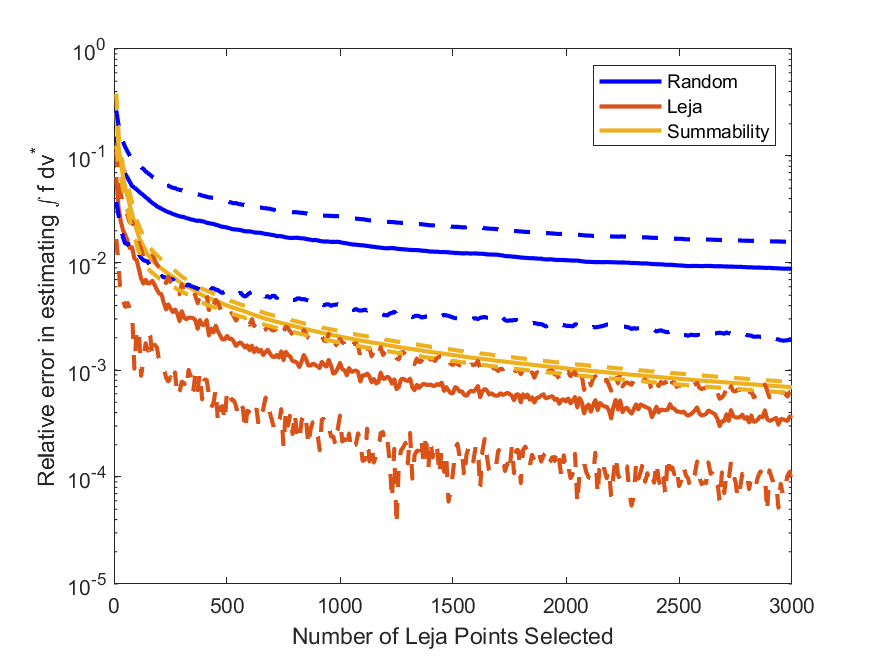} 
\end{tabular}
\caption{Two cluster metric graph example with non-uniform density.  We display both an example of the graph/function, and the quadrature error results over 1000 instantiations of the random graph. The dotted lines correspond to the confidence intervals around each mean curve of the same color.}\label{fig:twocluster}
\end{figure}

We also demonstrate that the location of the Leja points for each of these data sets, using $\nu^*$ to be an inverse density estimate, in Figure \ref{fig:leja_synthetic}.
\begin{figure}
\centering
\begin{tabular}{ccc}
\includegraphics[width=.3\textwidth]{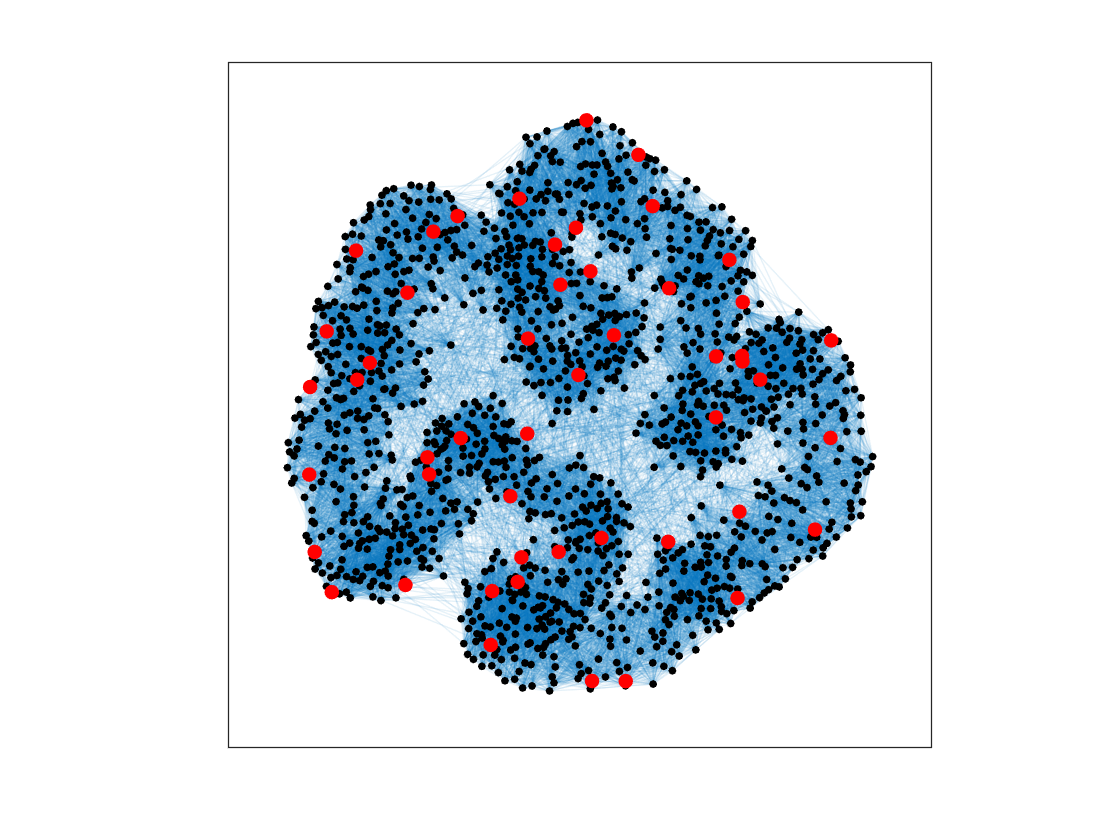} & 
\includegraphics[width=.3\textwidth]{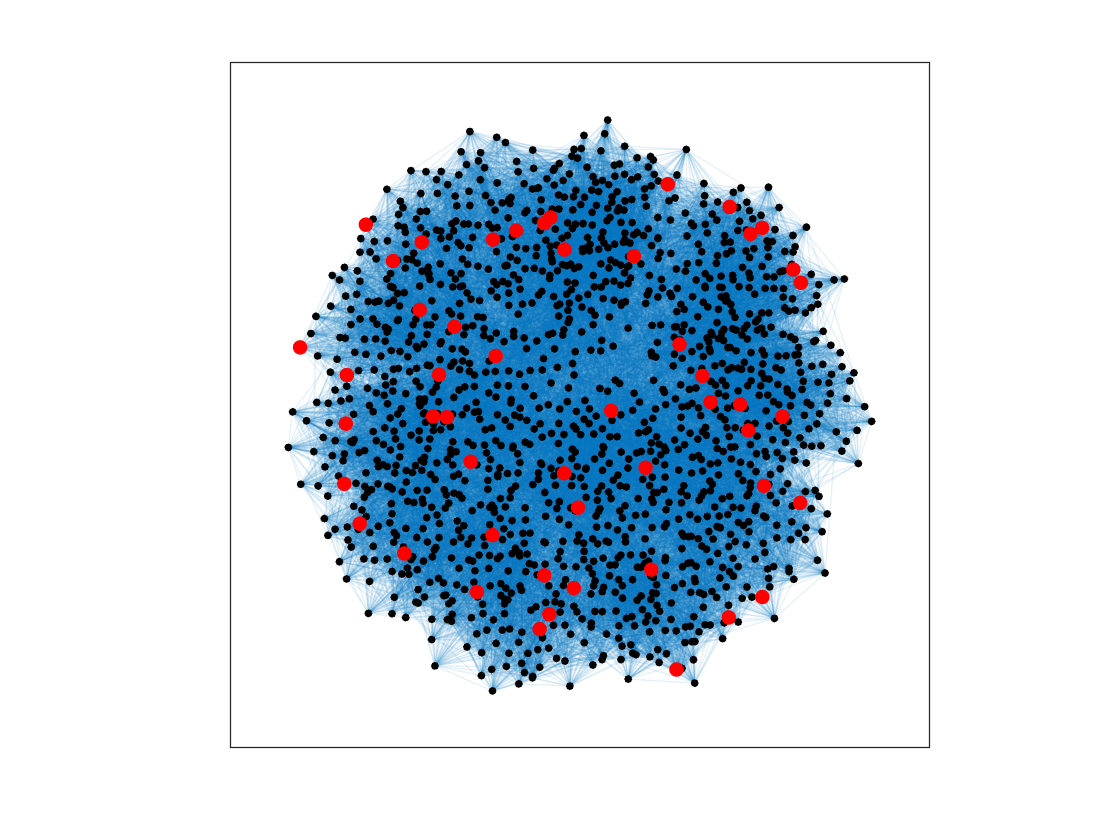} &
\includegraphics[width=.3\textwidth]{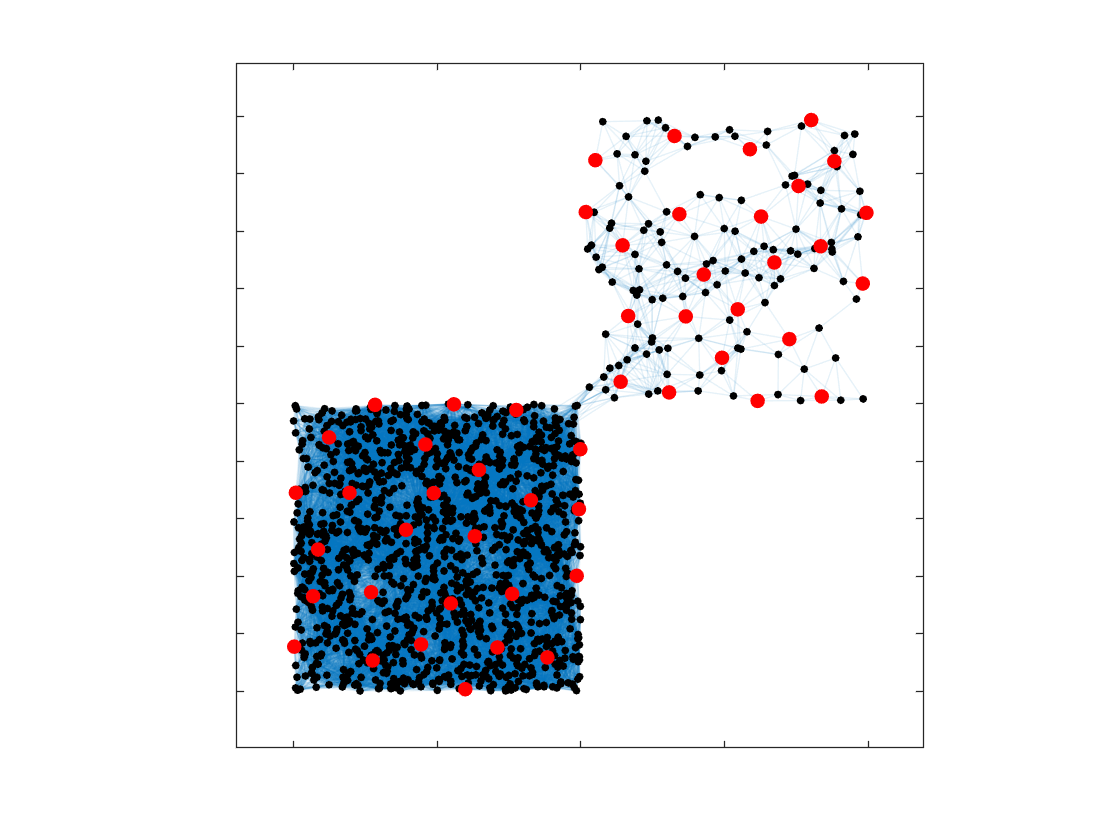} \\
WS graph $\beta=.05$ & WS graph $\beta=.25$ 
& Two Cluster metric graph
\end{tabular}
\caption{Layout of 50 Leja points for various Watts-Stogatz data sets, with $\nu^*$ being the inverse of the kernel density estimate.}\label{fig:leja_synthetic}
\end{figure}

\subsection{Synthetic Graphs with Non-Band-Limited Functions}

A novelty of the results in Theorem \ref{theo:quadrature} is that it applies to fuctions that are not required to be band limited.

We demonsrate the strengths of the bound in Eq. \eqref{quadest}, namely that the function is not required to be band limited in order for the estimation bounds to apply.  To characterize this, we generate a series of experiments on the  
two cluster data from Figure \ref{fig:twocluster}, but with a significantly more complicated function $f$.  To construct $f$, we consider a spectral decomposition of the matrix $G = \Phi \Lambda \Phi^*$.   Then we construct the function spectrally via
\be\label{eq:spectraldecay}
f =  \sum_{k=1}^N\left(\xi_k \lambda_k e^{-\tau k/N}\right) \phi_k,
\ee
where $\xi_k\sim \textnormal{Unif}([0,1])$, and $\tau$ controls the rate of spectral decay with respect to the eigenvalue $\lambda_k$.    In Figure \ref{fig:f_tail}, we demonstrate the quadrature approximation error averaged across $1000$ instantiations of the random graph and random function.  We also consider these experiments across varying $\tau$, demonstrating empirically the effect of the spectral decay rate on the overall quadrature approximation rate.   
\begin{figure}
\centering
\begin{tabular}{cc}
\includegraphics[width=.4\textwidth]{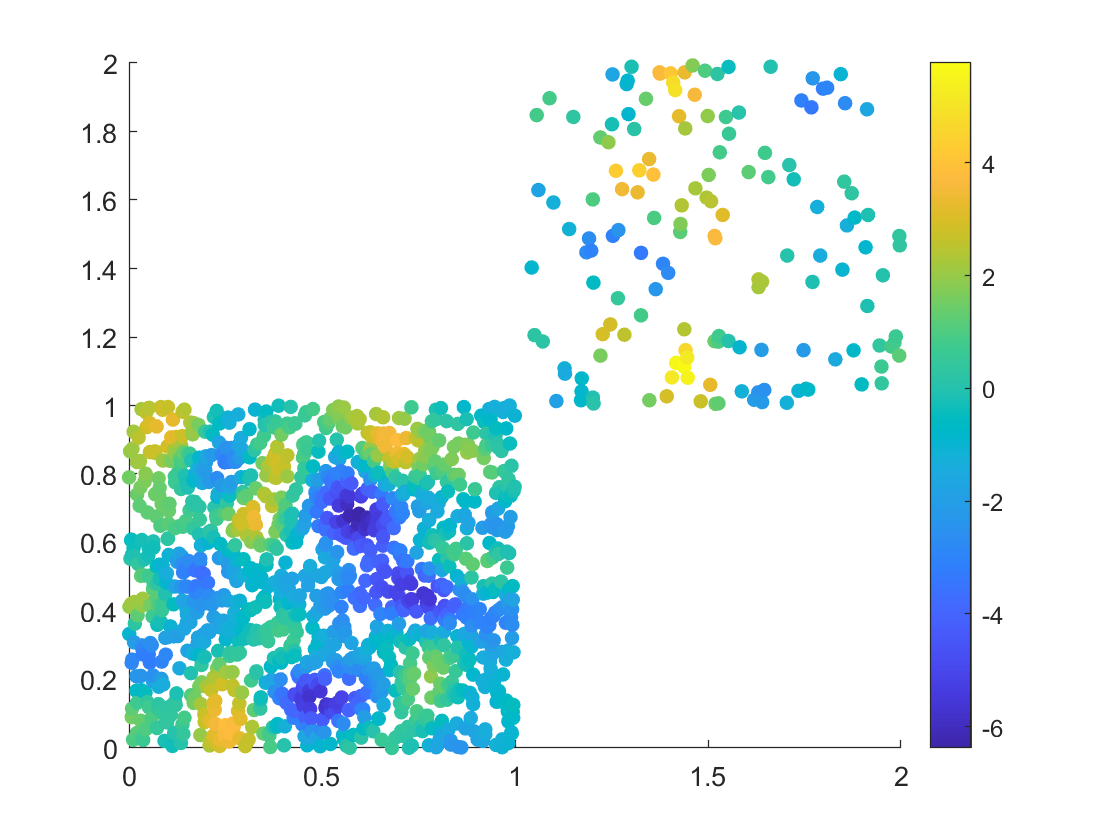} & 
\includegraphics[width=.4\textwidth]{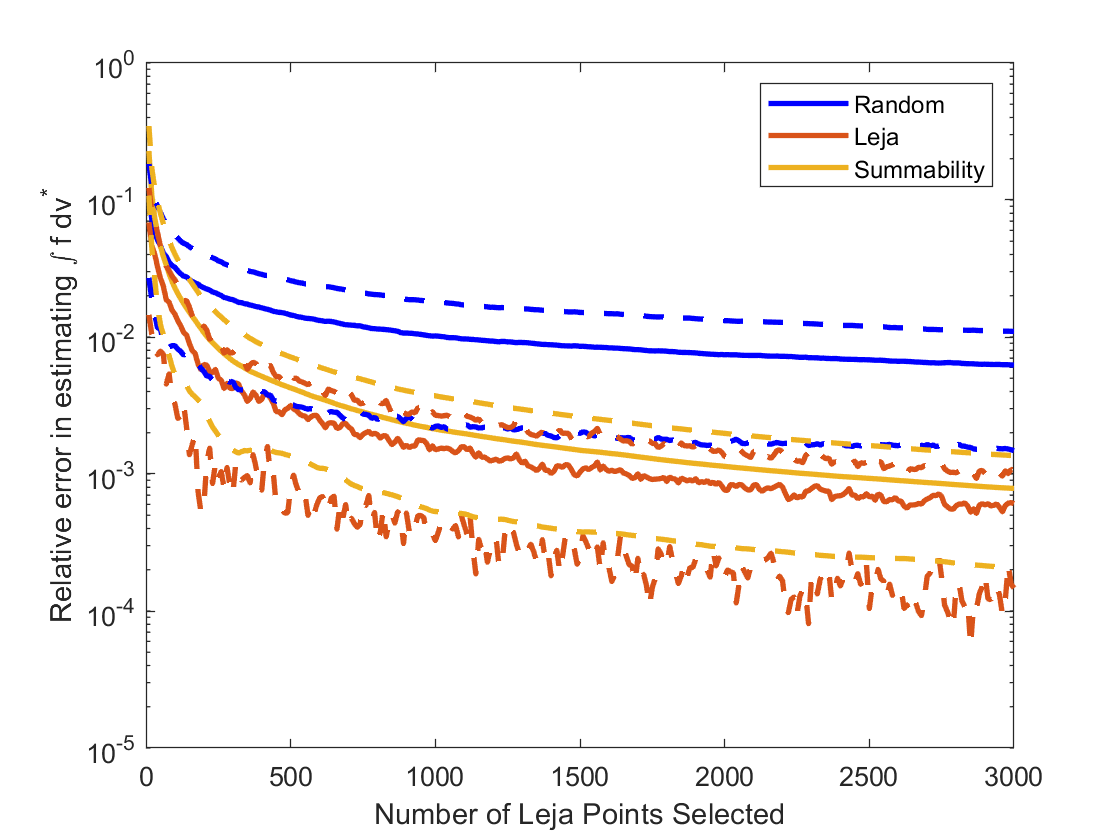} \\
Data and Function Example & Fast spectral decay $\tau=10$\\
\includegraphics[width=.4\textwidth]{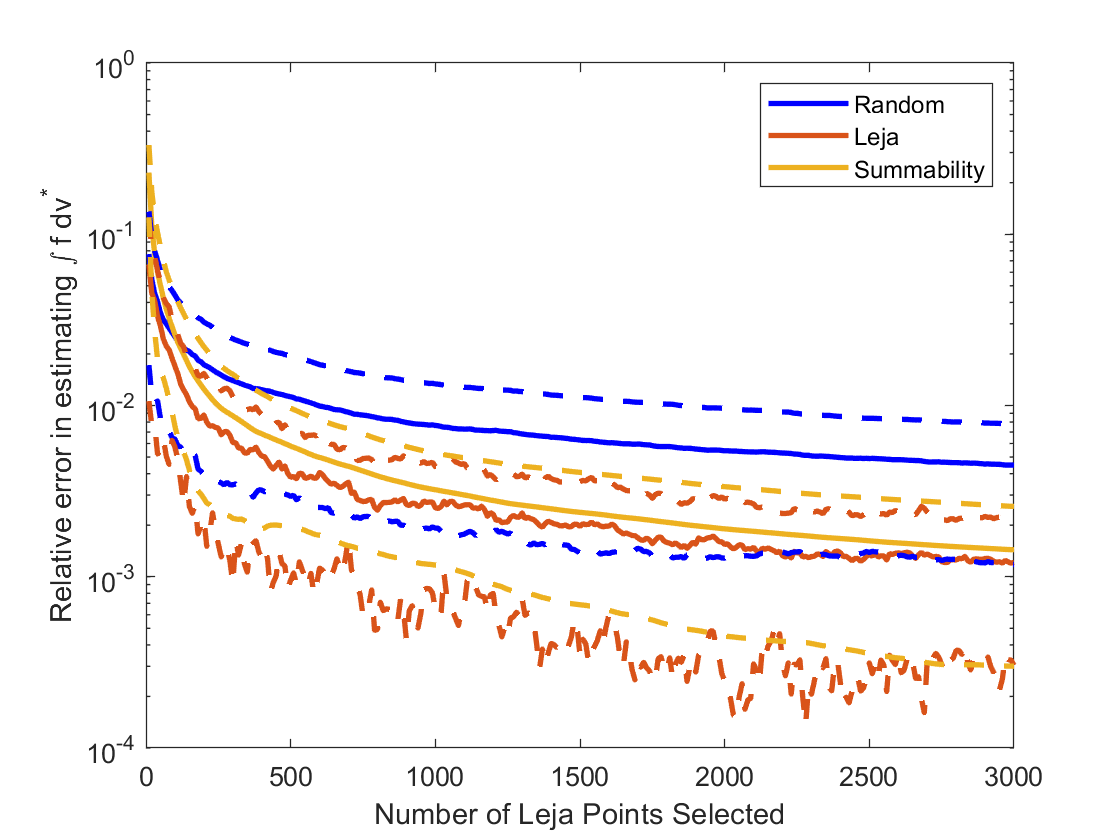} &
\includegraphics[width=.4\textwidth]{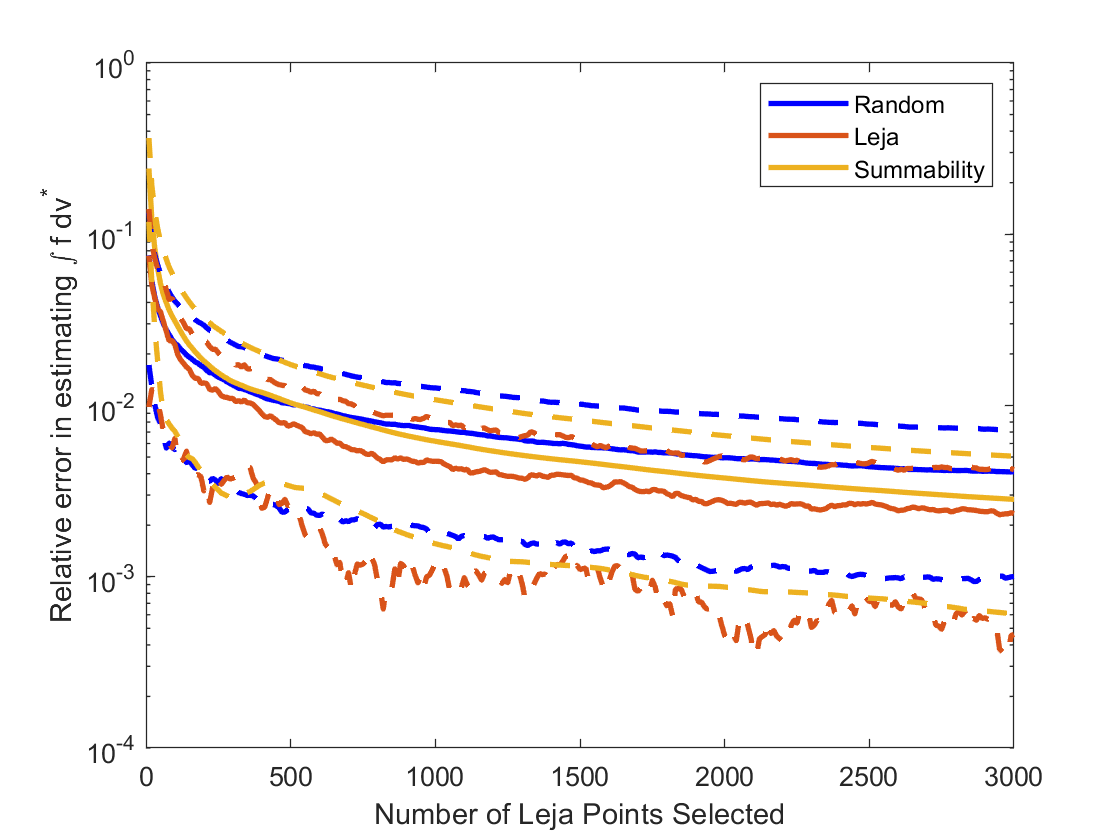}\\
Medium spectral decay $\tau=5$ & Slow spectral decay $\tau=1$
\end{tabular}
\caption{Two cluster metric graph example with non-bandlimited function as in Eq. \eqref{eq:spectraldecay}.  We display the quadrature error results over 1000 instantiations of the random graph/function, and for varying $\tau$. The dotted lines correspond to the confidence intervals around each mean curve of the same color.}\label{fig:f_tail}
\end{figure}

Figure \ref{fig:f_tail} shows an reduction in the gap between the Monte Carlo quadrature approximation rate and the Leja point approximation rate as the spectral decay tail increases.  This is to be expected, as a larger spectral tail reduces the smoothness of the function with respect to the graph.  But this experiment does show that  Leja point quadrature approximation rate does have a benefit over random sampling without the requirement that the target function be  spectrally band-limited.

\subsection{Real-World Graphs}\label{bhag:realworld}
Next, we illustrate our theory using a couple of real world graphs and functions.  
Since the graph is fixed in these examples and cannot be regenerated, we compute the Leja sequence for $1000$ different random choices of $a_0$, and average the errors.  Similarly for the Monte Carlo comparison, we generate $1000$ different sequences of sampled points.

In this first set of experiments, we examine the quadrature error for labels from the CORA data set \cite{sen2008collective}. 
 The standard form of the data set comprises a digraph with 2708 publications as vertices, and edge from $i$ to $j$ means that paper $i$ cited paper $j$. These publications are from seven areas of computer science and information theory, which corresponds to the 7 different classes in the data set.
There are a large number of weakly connected components, many of which are singletons.  The largest weakly connected component has $2485$ nodes.  
We treat the citation links as undircted edges and construct a binary, symmetric adjacency matrix $A$, and take the largest connected component as the graph.  This is equivalent to the largest weakly connected component, which results in an undirected graph with $2485$ vertices.

 Due to the sparsity of the number of edges in this graph, 
using the matrix $G$ constructed directly from the graph results in a slow rate at which $\sum_k G(\cdot, a_k)$ changes as the number of Leja points increases. 
The process is expedited if we extend the connections between documents to two steps away by taking $B=A^2$.  
In this experiment in Figure \ref{fig:cora}, we take the function $f$ to be the indicator function of class $i$ for $i=1, ..., 7$.  These labels represent the field of the given document, and are related to the network due to the obvious fact that papers in the same field are more likely to cite one another.

\begin{figure}
\centering
\footnotesize
\begin{tabular}{cccc}
\includegraphics[width=.2\textwidth]{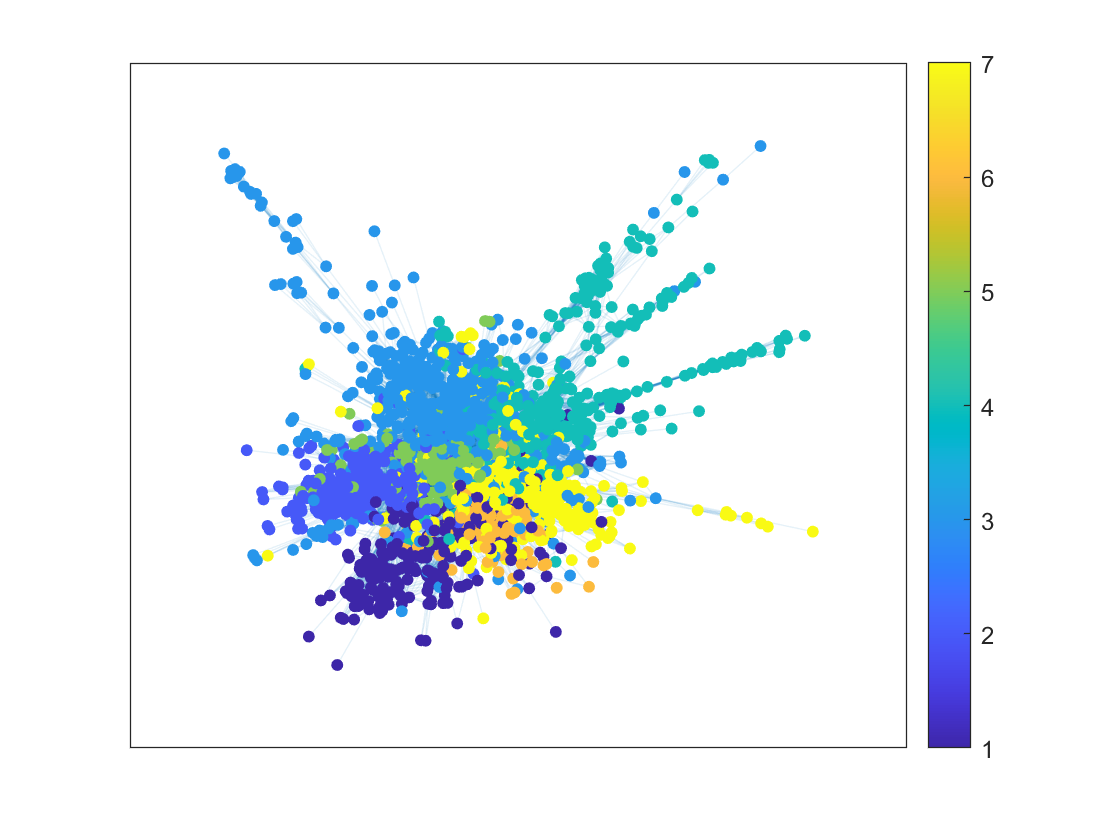} & 
\includegraphics[width=.2\textwidth]{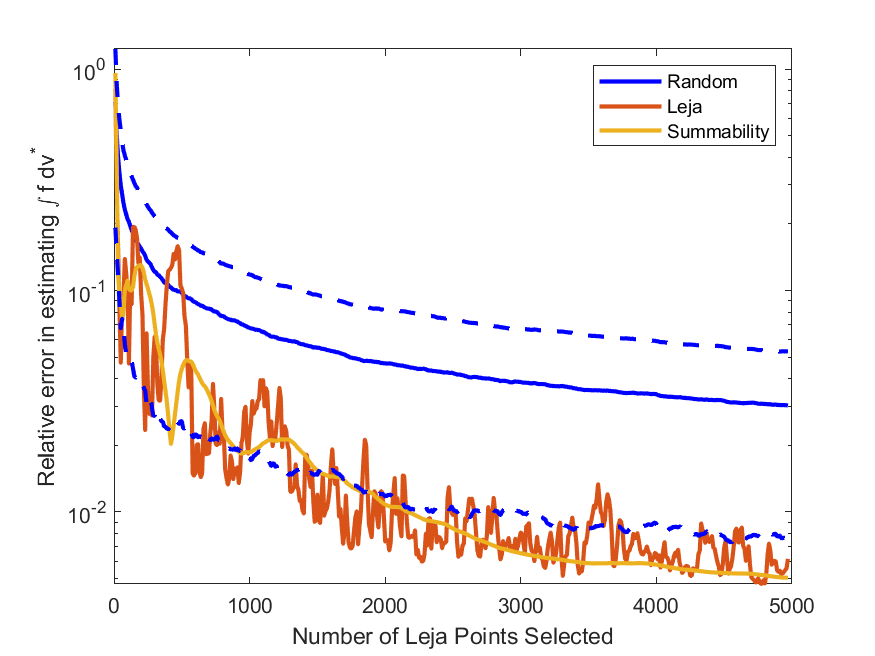} & 
\includegraphics[width=.2\textwidth]{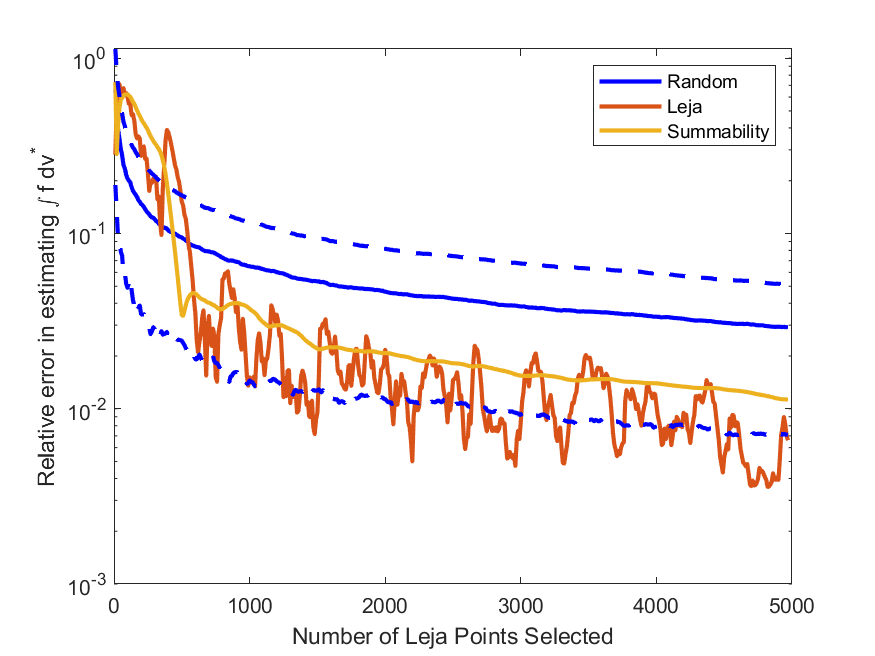} & 
\includegraphics[width=.2\textwidth]{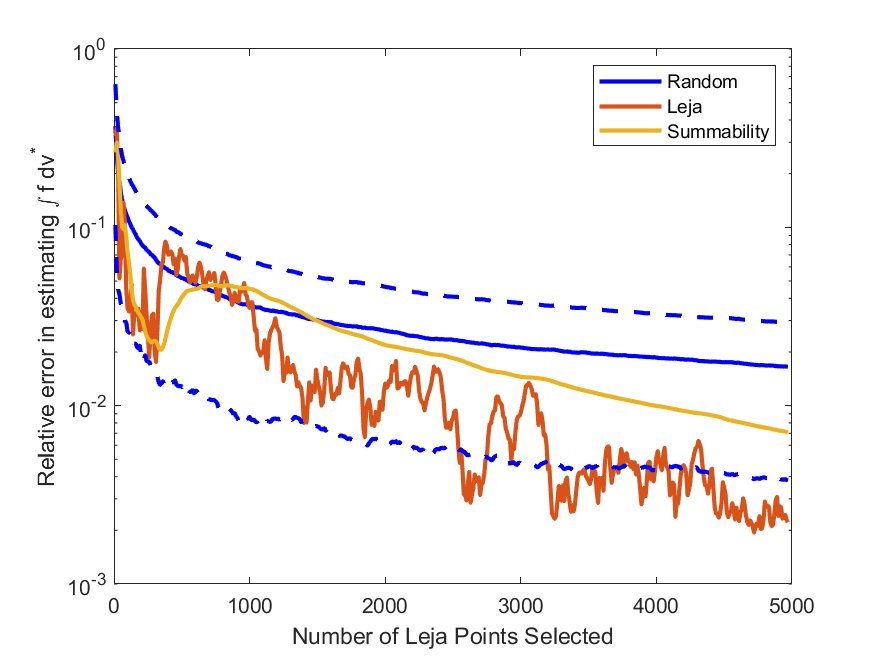} \\
Data and Labels & Label 1 & Label 2 & Label 3\\
\includegraphics[width=.2\textwidth]{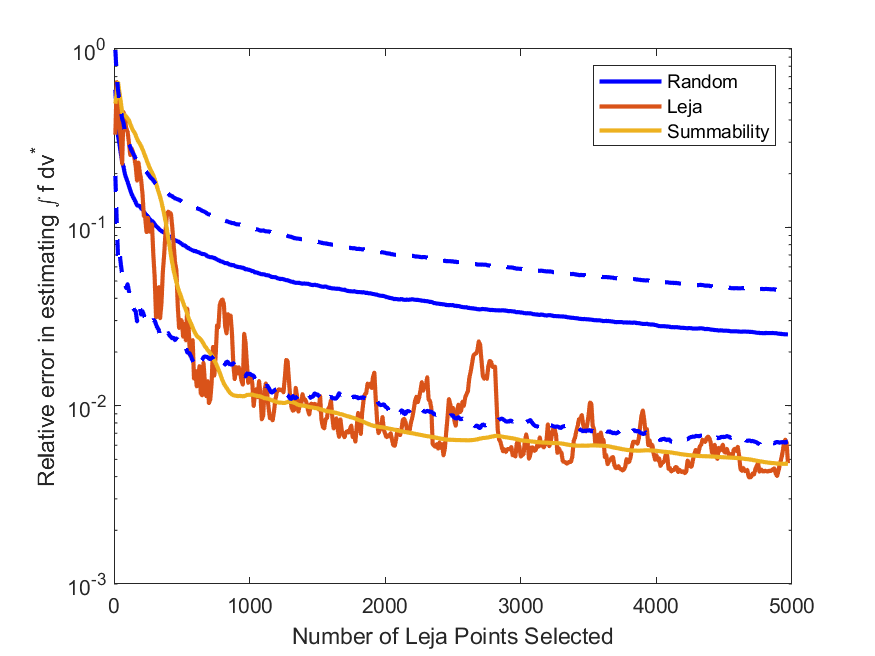} &
\includegraphics[width=.2\textwidth]{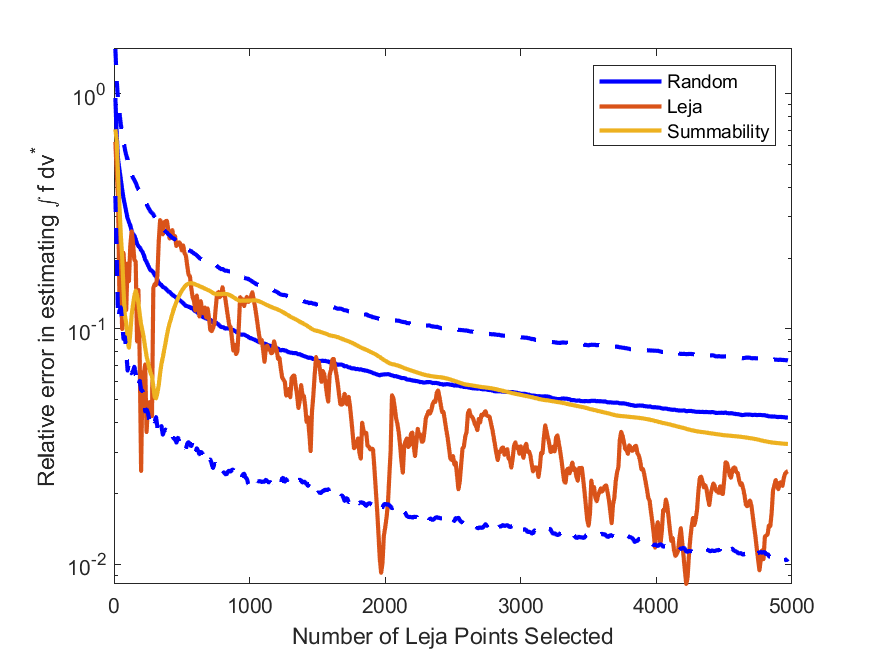} & 
\includegraphics[width=.2\textwidth]{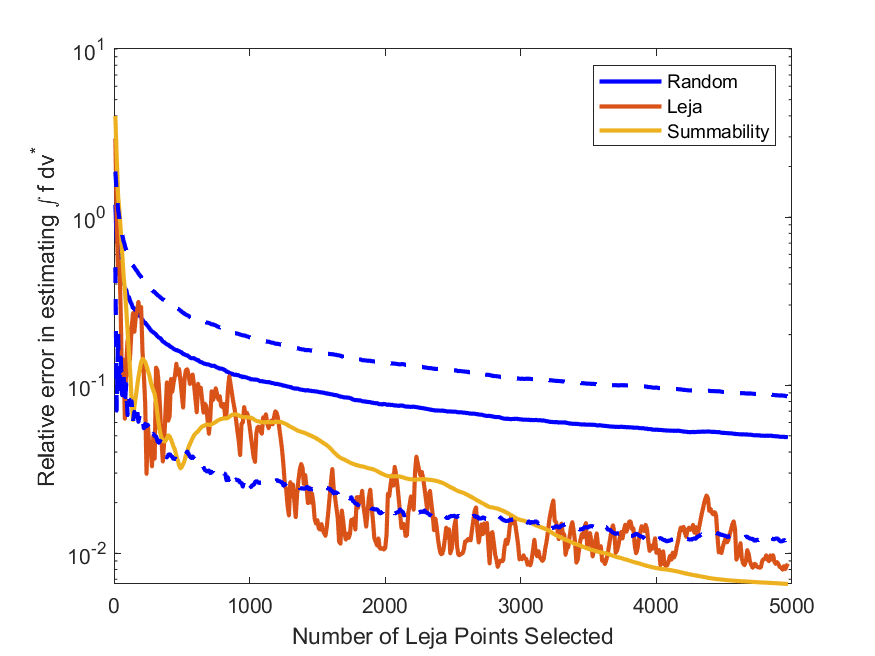} & 
\includegraphics[width=.2\textwidth]{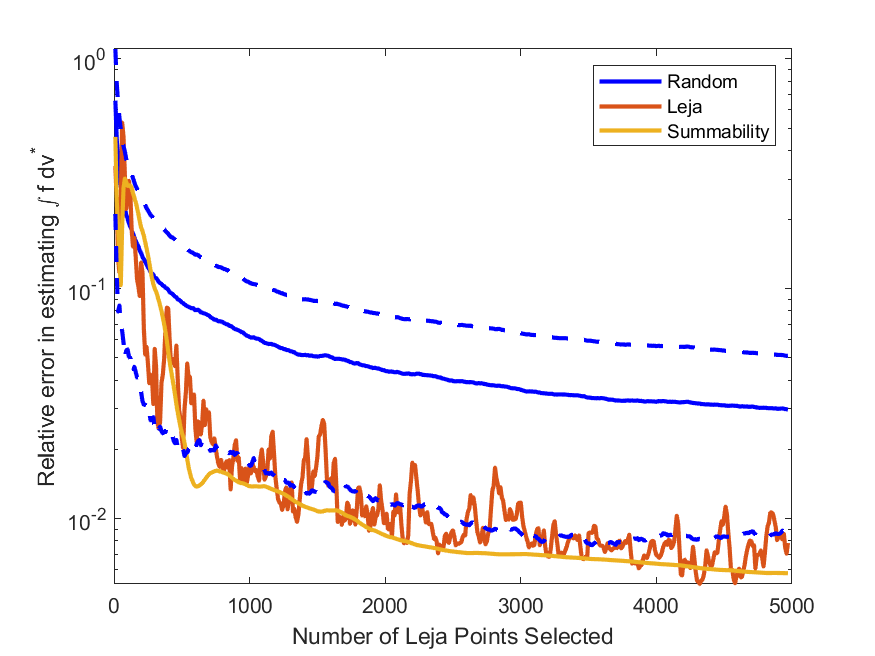}  \\
Label 4 & Label 5 & Label 6 & Label 7
\end{tabular}
\caption{Cora data set quadrature error for indicator function of various labels.  The Leja and summability curves are averaged across 1000 different initializations.  The dotted lines around the Monte Carlo curve are the confidence intervals. 
}\label{fig:cora}
\end{figure}

In the second set of experiments, we examine the exepcted value approximation for positive association with various propositions in the Proposition dataset \cite{smith2013role,zhu2014tripartite}.  The November 2012 California ballot contained 11 initiatives, or propositions, on a variety of issues, including state taxation, corrections, and food labelling among others. The data consist of Twitter posts related to initiatives, grouped according to different propositions. For each proposition, the data is a directed graph with edge from $i$ to $j$ if the tweet originated from user $i$ to user $j$. 
 The authors of \cite{smith2013role} have assigned an evaluation of the emotion of the sender with each tweet. We treat the edges as undirected and construct a binary, symmetric adjacency matrix $A$.
 We take the largest connected component of the graph for these experiments.  Each proposition has its own graph, with a positive or negative label of sentiment on each node.  The mean binary sentiment estimate could be used as a proxy for the number of people that would vote for/against the proposition.
 
 Due to the sparsity of the number of edges in this graph, we extend the connections between documents to two steps away by taking $B=A^2$.  
In this experiment, we take the function to be a binary indicator function ($0$ or $1$) of whether the sentiment is positive.   We examine two different graphs in Figure \ref{fig:prop}, those for Proposition 30 and those for Proposition 37, as those are the graphs with the largest number of vertices, with $4436$ vertices for Prop 30 and $8039$ vertices for Prop 37.  

As we can see in both the Cora experiments and the Proposition experiments, the Leja quadrature error performs better than the average Monte Carlo performance even for a small number of sampled points.  We also wish to address the relative lack of smoothness of the quadrature approximations relative to the synthetic experiments and the Monte Carlo trials.  First, recall that the labels here are incredibly non-smooth as they are binary.  Second, we note that the Leja point sampling can be thought of as a distance sampling scheme, choosing consecutive points far apart from one another and points likely to have different labels.  Because of these points, the approximation error will have a larger local variability than random sampling, even when averaged across multiple initializations.  This is also a motivation for using the summability approximation, which applies a decaying weight to added points and does not demonstrate the same local fluctuations as the unweighted Leja point averages.

\begin{figure}
\centering
\footnotesize
\begin{tabular}{cc}
\includegraphics[width=.4\textwidth]{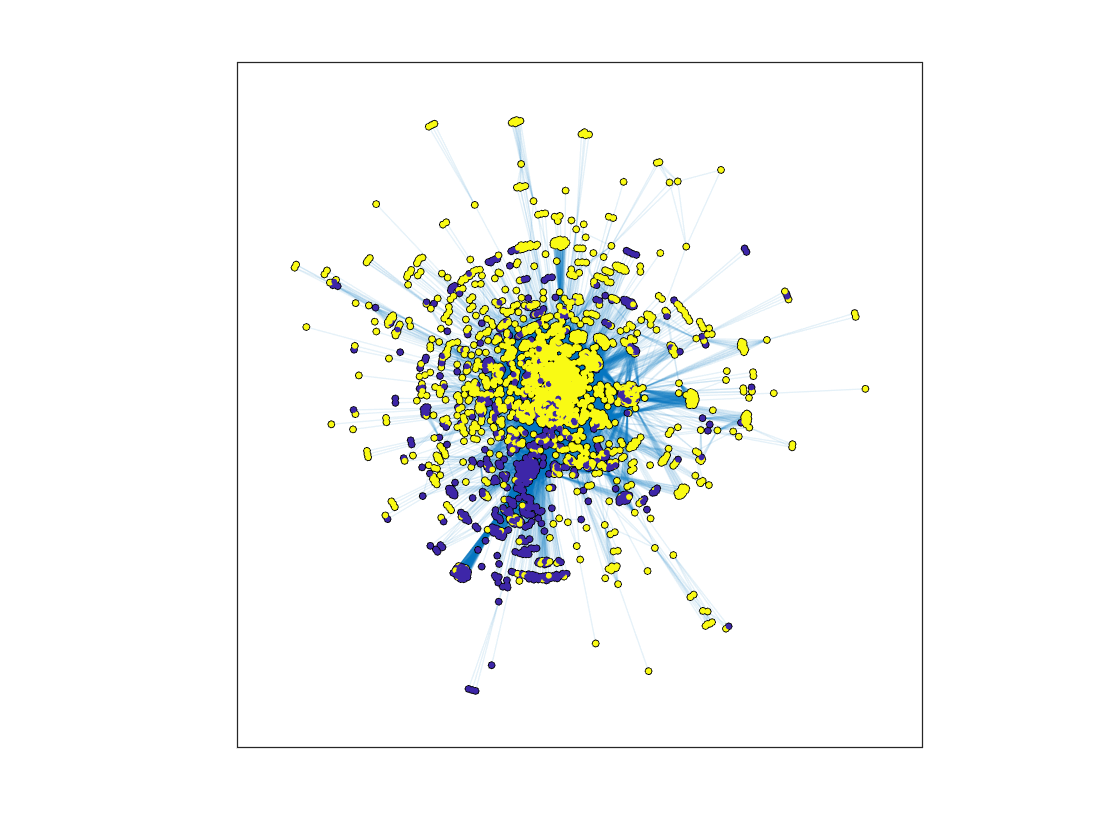} & 
\includegraphics[width=.4\textwidth]{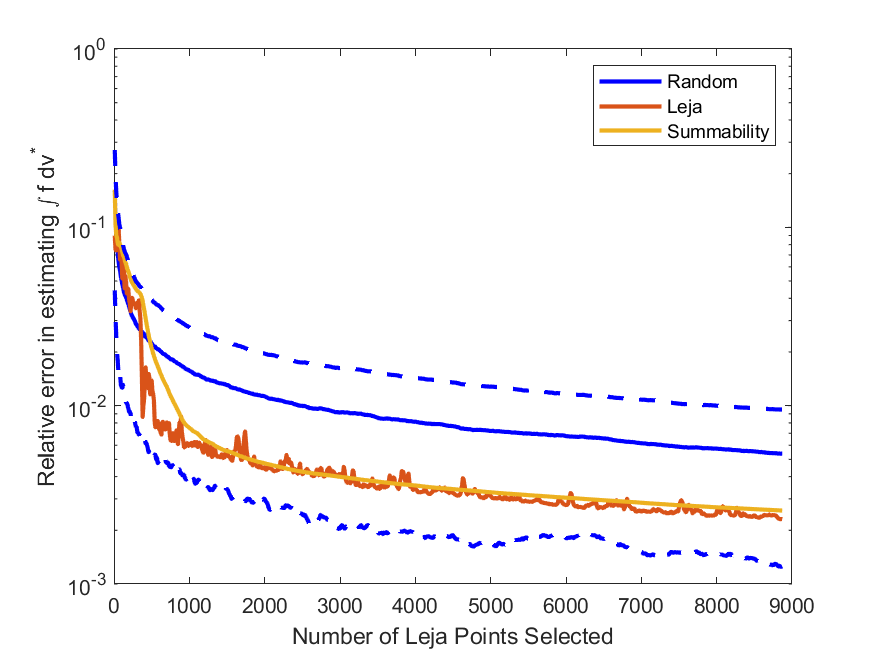} \\
(a) & (b)\\
\includegraphics[width=.4\textwidth]{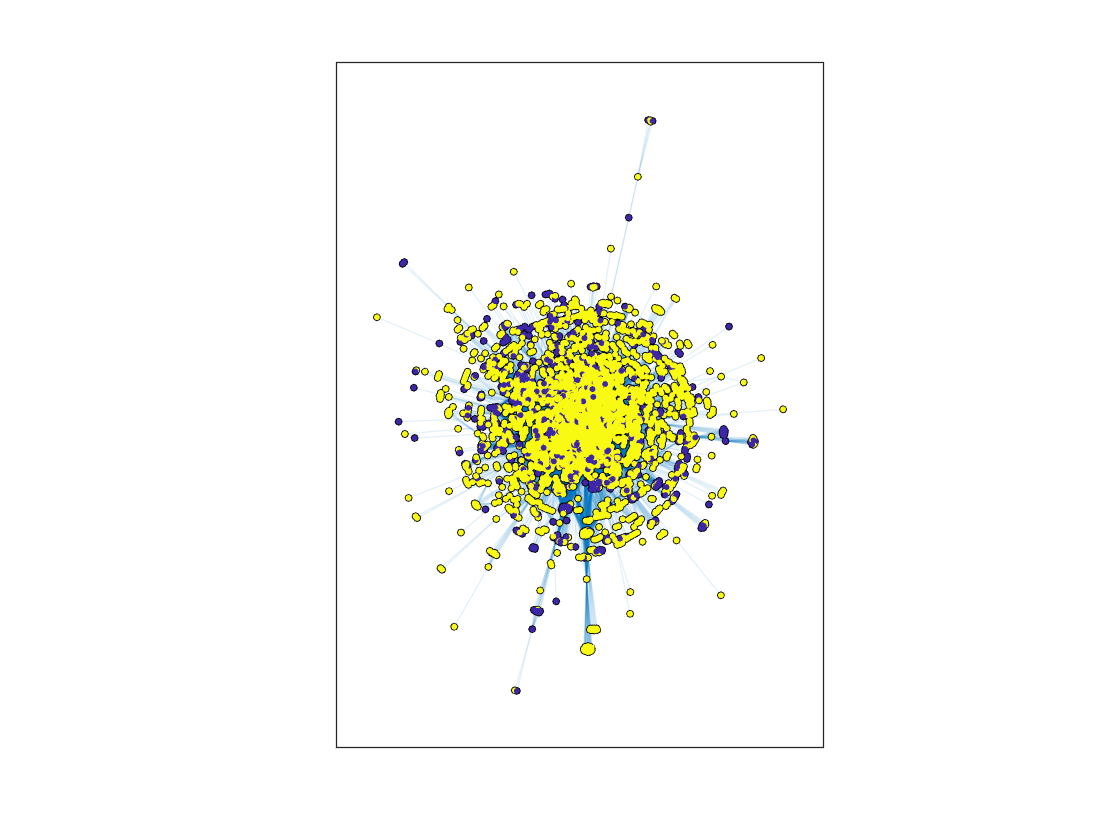} & 
\includegraphics[width=.4\textwidth]{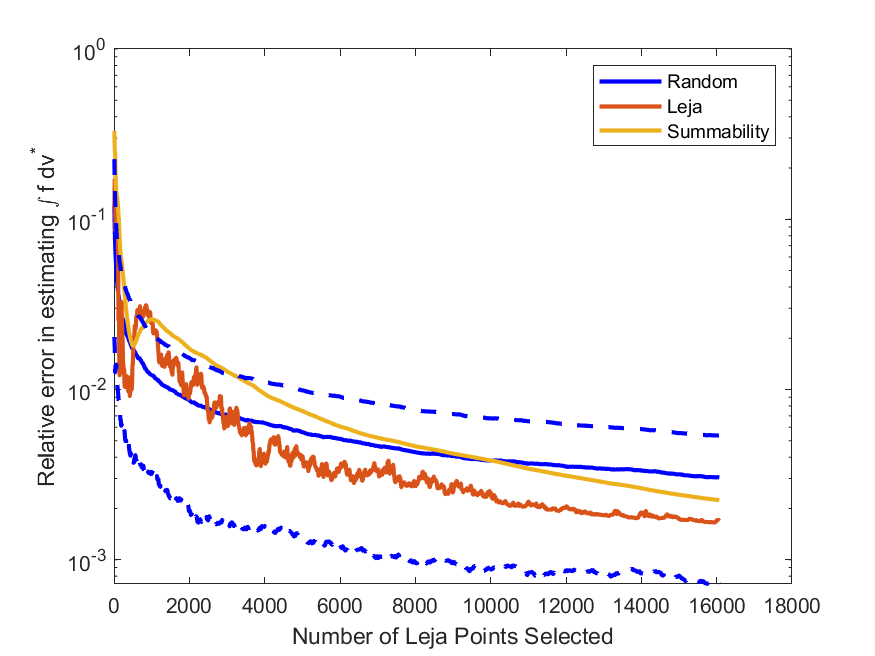} \\
(c) & (d)
\end{tabular}
\caption{Proposition data set quadrature error.  (a)-(b) Proposition 30, (c)-(d) Proposition 37.  The Leja and summability curves are averaged across 1000 different initializations.  The dotted lines around the Monte Carlo curve are the confidence intervals.}\label{fig:prop}
\end{figure}

We also demonstrate that the location of the Leja points for each of these data sets, using $\nu^*$ to be an inverse density estimate, in Figure \ref{fig:leja_real}.
\begin{figure}
\centering
\footnotesize
\begin{tabular}{ccc}
\includegraphics[width=.3\textwidth]{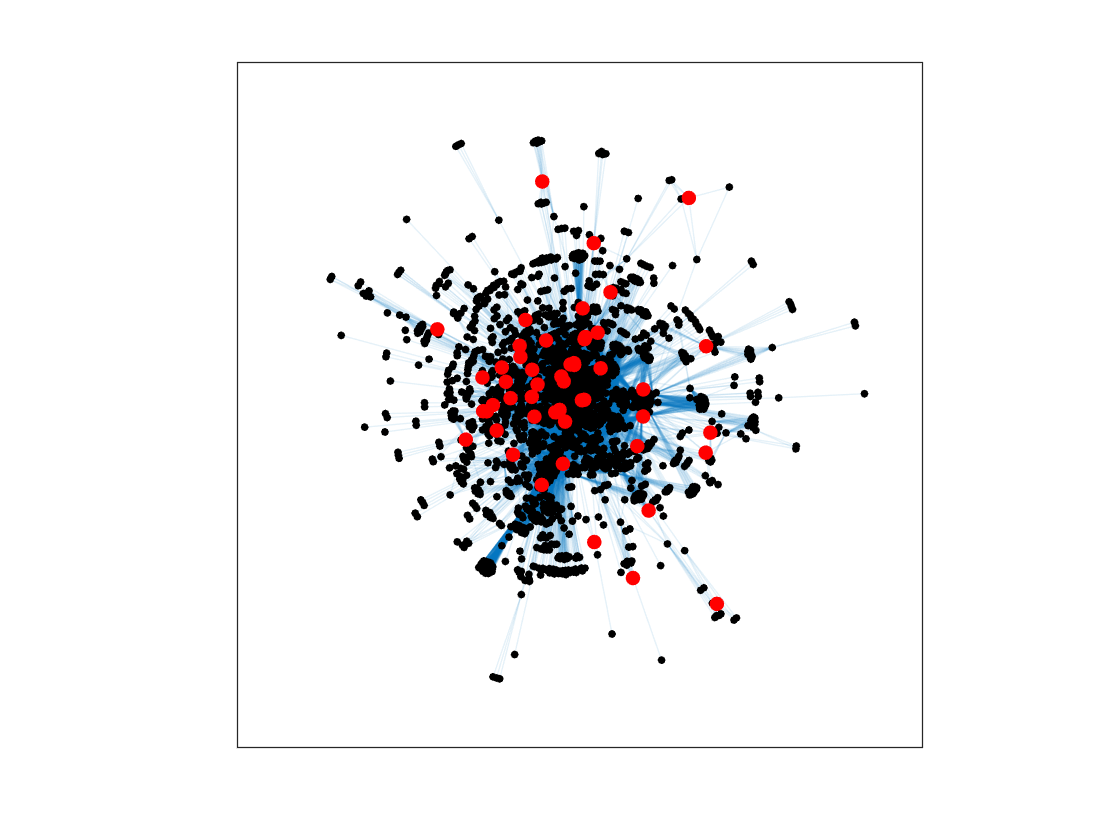} & 
\includegraphics[width=.3\textwidth]{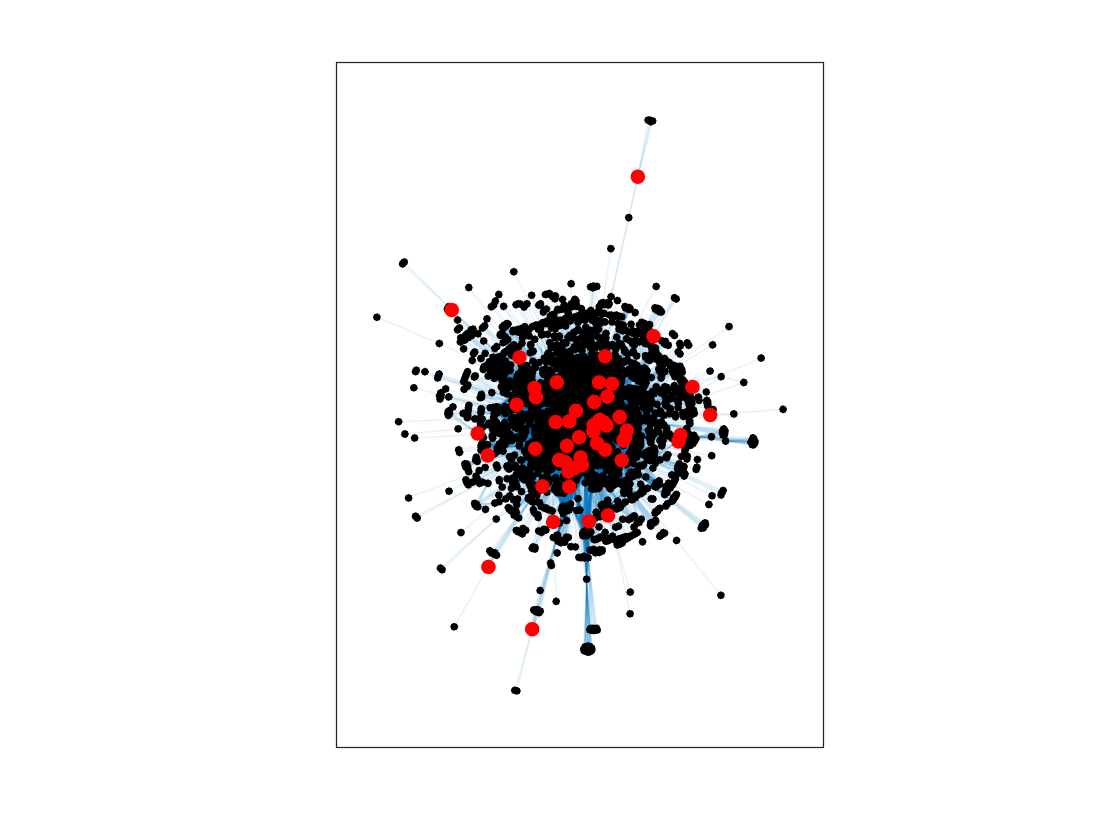} & 
\includegraphics[width=.3\textwidth]{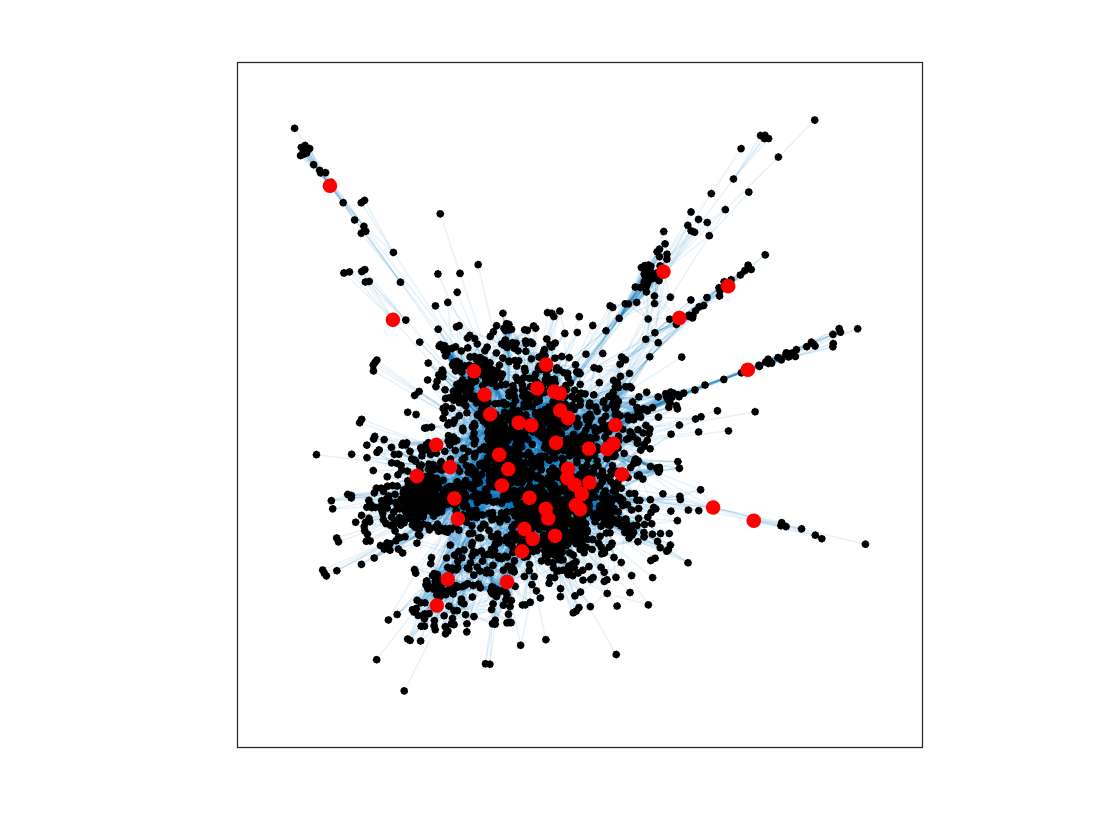} \\
Prop 30 & Prop 37 & Cora
\end{tabular}
\caption{Layout of 50 Leja points for various real-world data sets, with $\nu^*$ being the inverse of the kernel density estimate.}\label{fig:leja_real}
\end{figure}

\subsection{Additional Comparisons}

In this section, we compare our method to a natural alternative: computing a function interpolation from the sampled subset $W\subset V$ to the entire set of vertices $V$, and then integrating that estimate against $\nu^*$.  A recent method for this function interpolation is graph  sampling set selection (GSSS)  \cite{sakiyama2019eigendecomposition}, which is built on taking a weighted linear combination of columns of the spectrally filtered graph Laplacian.  We compare to this method in Figure \ref{fig:gsss} on the community graph data set proposed in \cite{sakiyama2019eigendecomposition}.  We compare using both the original adjacency matrix $A$, and using the two-hop adjacency matrix $A^2$, as the starting set of edges.  We take $\nu^*$ to be the inverse density of each community, normalized.
Again, we compare across 1000 instantiations of the graph with $N=1000$ points, and report the mean and standard deviation of the error at each fraction of points kept.   The function being regressed in each instantiation is chosen to be constant on each community, with the value randomly choosen from a uniform distribution.  

\begin{figure}
\centering
\begin{tabular}{cc}
\includegraphics[width=.4\textwidth]{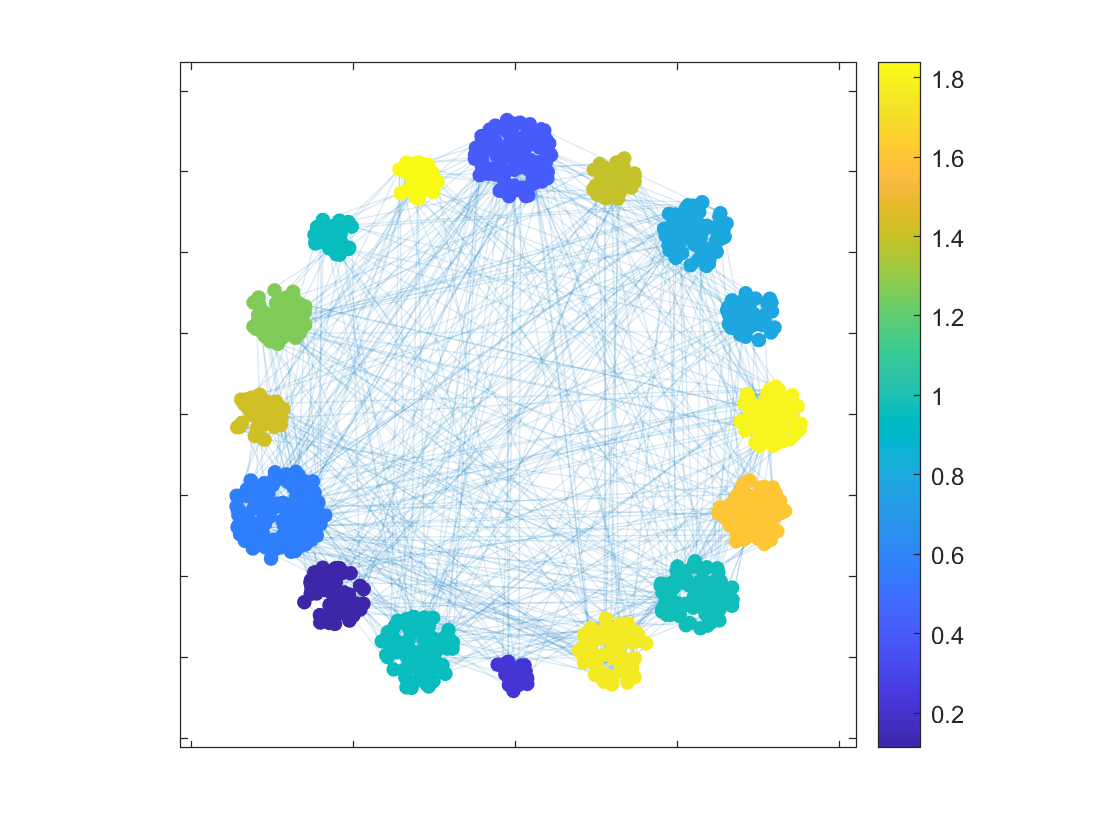} & 
\includegraphics[width=.4\textwidth]{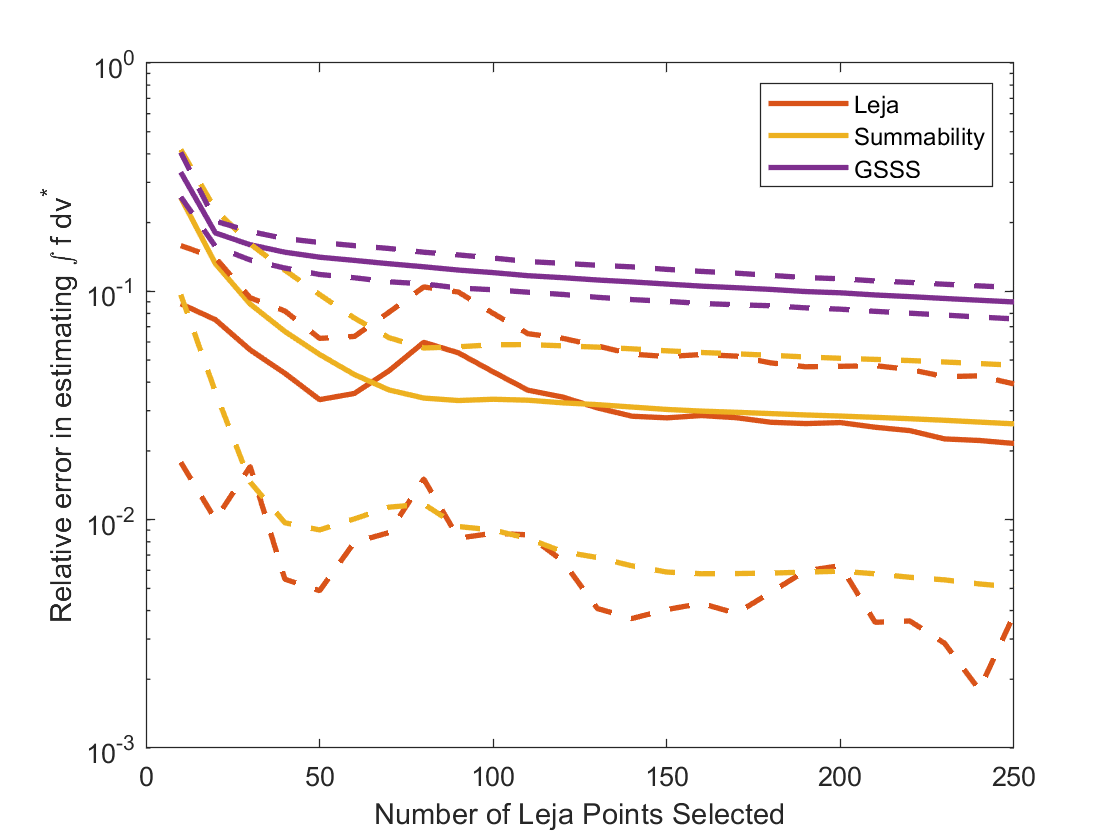} \\
(a) & (b)\\
\includegraphics[width=.4\textwidth]{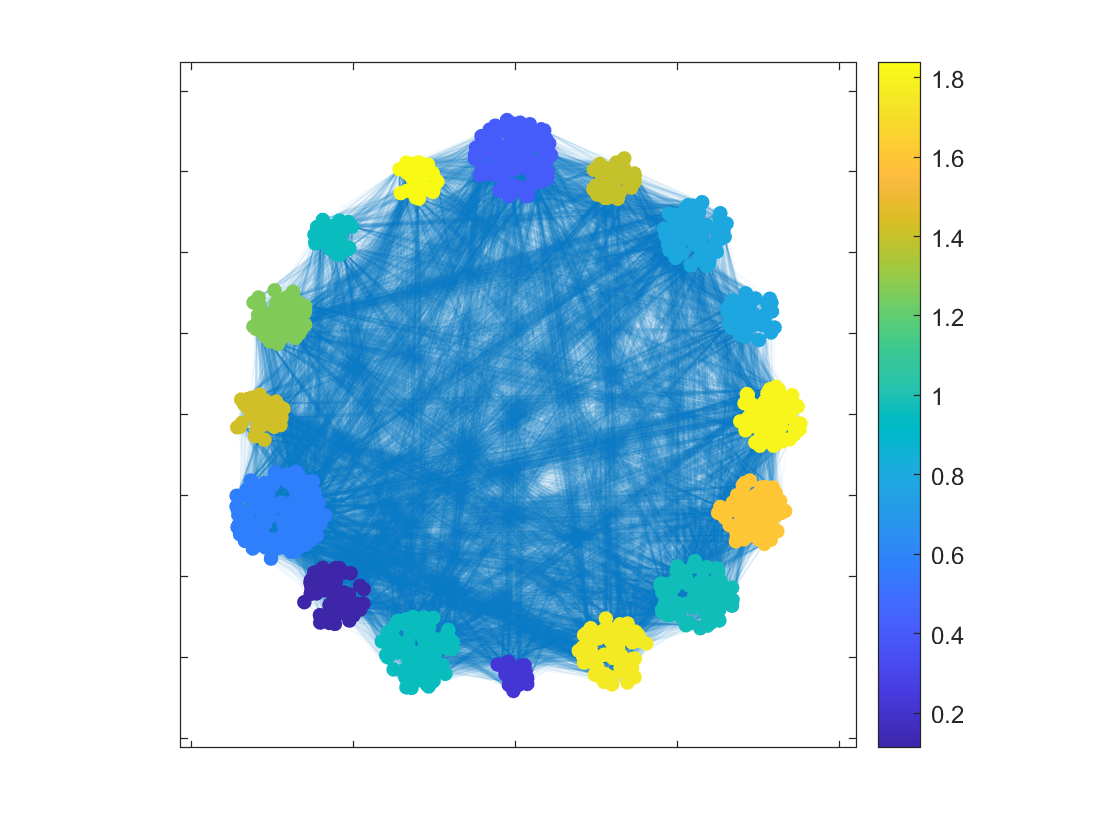} & 
\includegraphics[width=.4\textwidth]{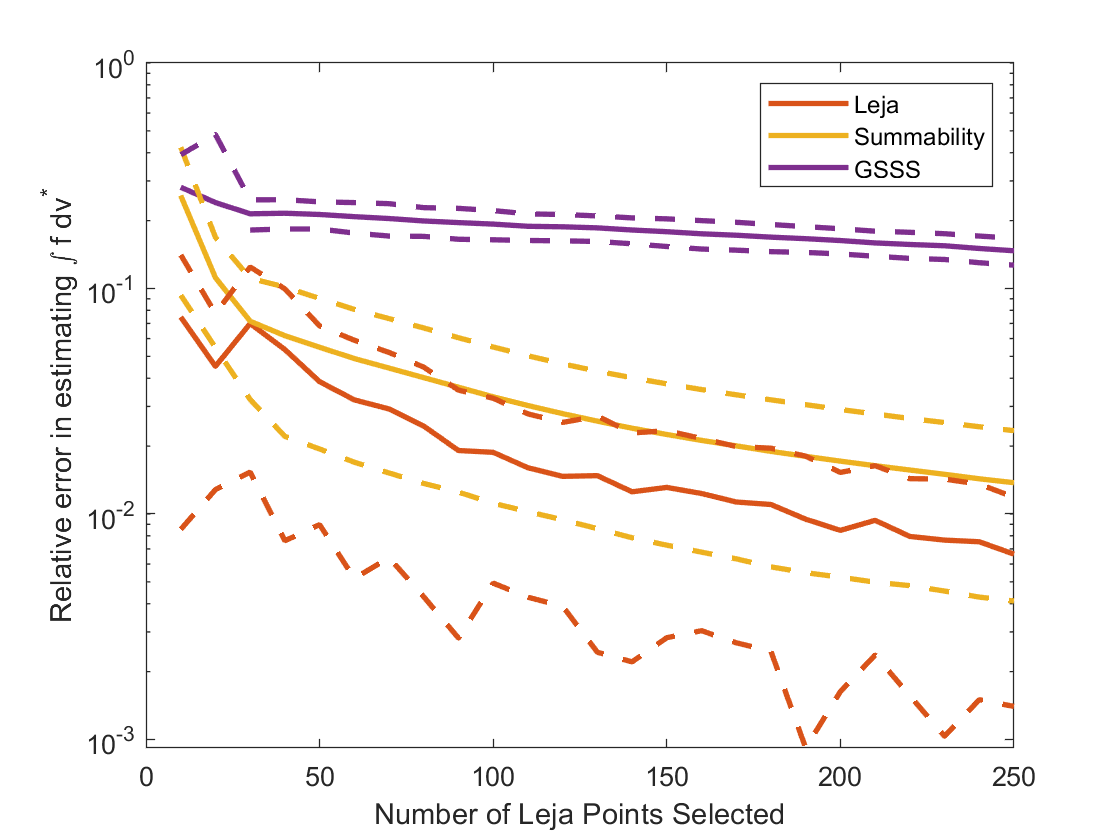}\\
(c) & (d)
\end{tabular}
\caption{(a)-(b) Adjacency of communities, (c)-(d) Adjacency matrix squared.  We display both an example of the graph/function, and the quadrature error results over 1000 instantiations of the random graph. The dotted lines correspond to the confidence intervals around each mean curve of the same color.}\label{fig:gsss}
\end{figure}

Finally, we demonstrate that the  bound \eqref{quadest} on the   quadrature error is independent of the number of data points for problems of fixed complexity.  In this experiment, we return to the two cluster example from Figure \ref{fig:twocluster} while varying the number of data points.  The graph created from this point cloud remains fixed, constructed with a Gaussian kernel of bandwidth $\sigma=0.1$.  In Figure \ref{fig:varyN}, we vary the number of data points and plot the quadrature error for a fixed number of Leja iterations.  After an initial level-off as the number of points grows, the error is more or less constant.  This shows that the error does not have a strong dependence on $N$ for neighbor graphs built from a fixed sampling distribution.
\begin{figure}
\centering
\footnotesize
\begin{tabular}{cc}
\includegraphics[width=.4\textwidth]{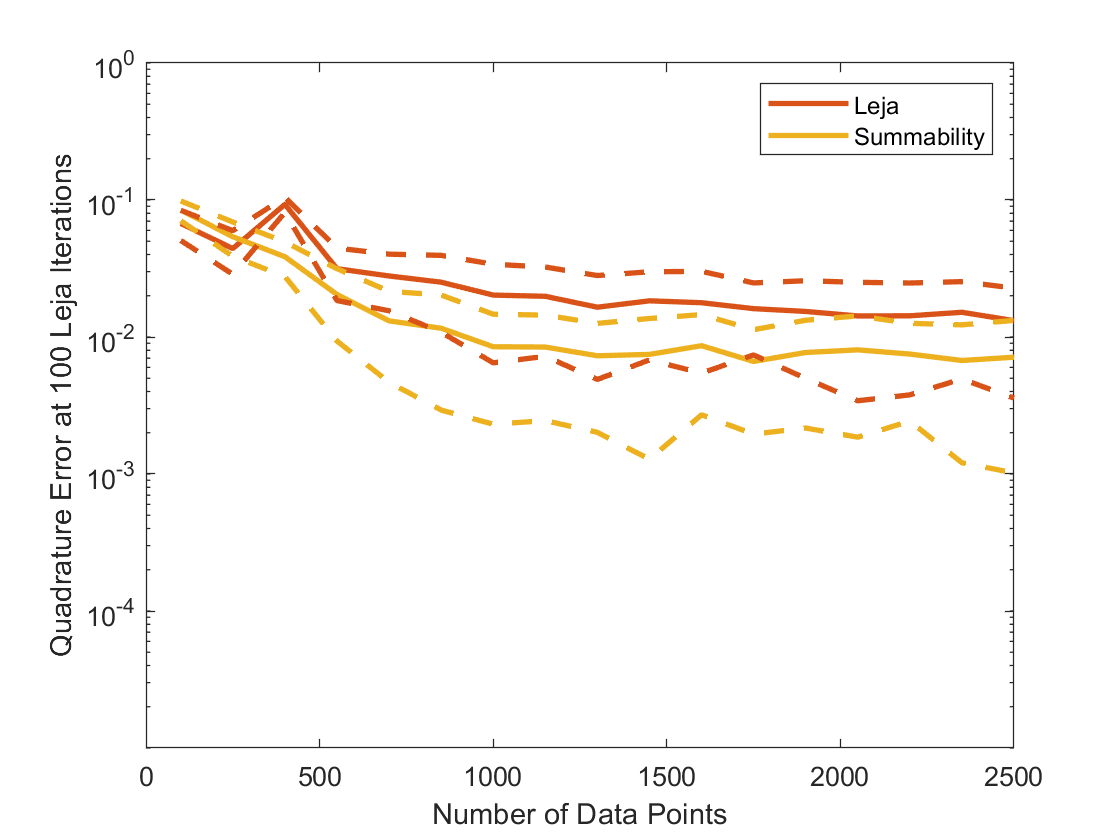} &
\includegraphics[width=.4\textwidth]{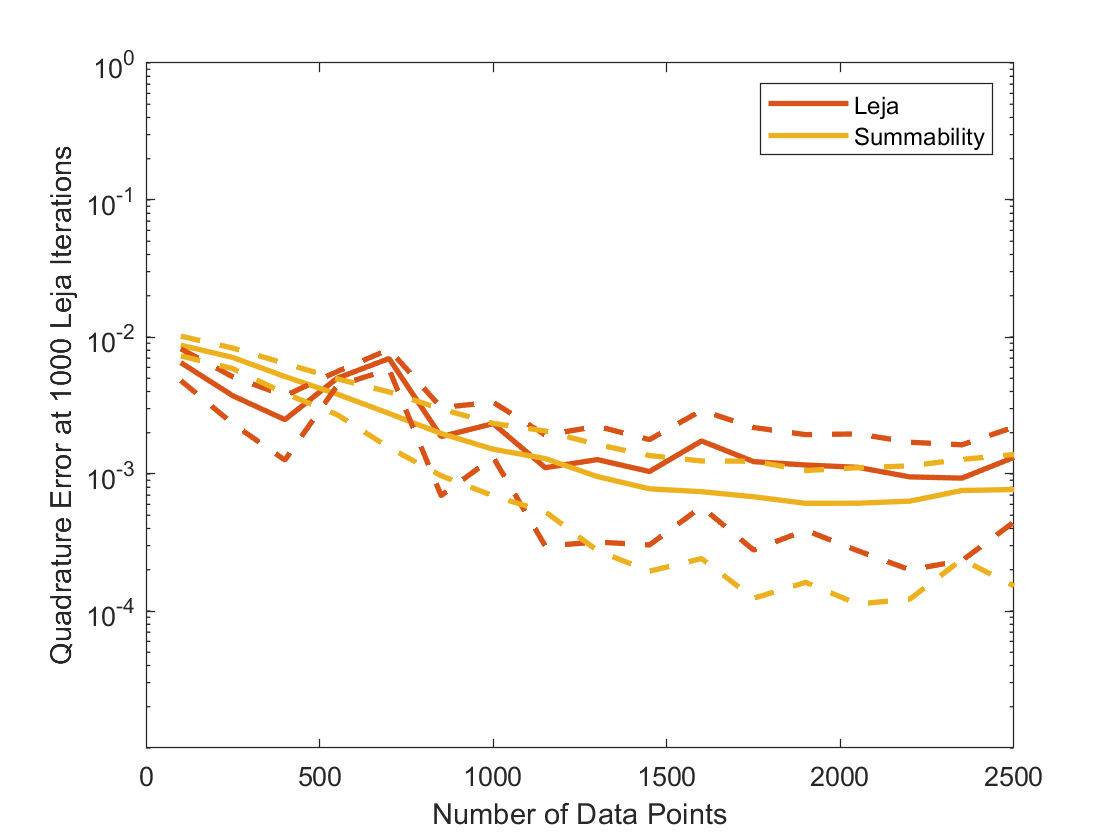} \\
100 Leja Iterations & 1000 Leja Iterations
\end{tabular}
\caption{Two cluster dataset with a fixed number of Leja points and a fixed Gaussian kernel bandwidth $\sigma=0.1$.  The quadrature error results are over 1000 instantiations of the random graph. The dotted lines correspond to the confidence intervals around each mean curve of the same color.}\label{fig:varyN}
\end{figure}

\bhag{Proofs}\label{bhag:proofs}

\noindent
\textsc{Proof of Lemma~\ref{lemma:frostman}.}\\

Let $\nu^*$ be an equilibrium measure. 
If $\nu\in\mathcal{P}$, then for all $t\in [0,1]$, $(1-t)\nu^*+t\nu\in\mathcal{P}$, and therefore, $t=0$ is a minimum for
$$
f(t)=G((1-t)\nu^*+t\nu,(1-t)\nu^*+t\nu)=(1-t)^2G(\nu^*,\nu^*)+t^2G(\nu,\nu)+2t(1-t)G(\nu,\nu^*).
$$
So, $f'(0)=-2G(\nu^*,\nu^*)+2G(\nu,\nu^*)\ge 0$. This implies \eqref{frostmanlow}.
Let $\epsilon>0$ and, in this proof only,  $L=\{x\in \XX : G(x,\nu^*)>\Gamma(G)+\epsilon\}$. Then
$$
\Gamma(G)=G(\nu^*,\nu^*)=\int_L G(x,\nu^*)d\nu^*+\int_{\XX\setminus L} G(x,\nu^*)d\nu^*(x) \ge \nu^*(L)(\Gamma(G)+\epsilon)+\Gamma(G)\nu^*(\XX\setminus L)=\Gamma(G)+\nu^*(L)\epsilon.
$$
Therefore, $\nu^*(L)=0$. Since $\XX$ is a finite set, this implies \eqref{frosmaneq}.

Next, let $G$ be conditionally positive semi-definite, and $\mu$ satisfy the conditions of part (b).
Then \eqref{tentfrosmaneq} shows that $G(\mu,\mu)=c$, and \eqref{tentfrostlow} shows that $G(\nu^*,\mu)\ge c$.
So,
$$
0\le G(\nu^*-\mu,\nu^*-\mu)=G(\nu^*,\nu^*)+G(\mu,\mu)-2G(\nu^*,\mu)\le G(\nu^*,\nu^*)-c.
$$
Thus, $c=G(\mu,\mu)\le G(\nu^*,\nu^*)$. Since $\nu^*$ is an equilibrium measure, it follows that $c=G(\mu,\mu)=G(\nu^*,\nu^*)=\Gamma(G)$, so that $\mu$ is also an equilibrium measure.
Also, $G(\nu^*-\mu,\nu^*-\mu)=0$, so that $\mu-\nu^*$ is in the null space of $G$.
\qed

The next theorem shows in a standard manner than the sequence $\sigma_n$ associated with the  Leja points converge to the equilibrium measure.

\begin{theorem}\label{theo:lejaconverge}
We have
\be\label{mmdist}
G(\sigma_n-\nu^*, \sigma_n-\nu^*)\le \frac{1}{n}\left(M-G(\nu^*,\nu^*)\right), \qquad n\in\ZZ_+.
\ee
Any weak-star limit of a subsequence of $\{\sigma_n\}$ is also an equilibrium measure. 
\end{theorem}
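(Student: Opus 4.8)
The plan is to prove the quantitative bound \eqref{mmdist} by an energy expansion exploiting the Leja property and the Frostman identity, and then to obtain the statement about weak-star limits by a compactness-and-continuity argument.

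For \eqref{mmdist}, I would start from the expansion
$G(\sigma_n-\nu^*,\sigma_n-\nu^*)=G(\sigma_n,\sigma_n)-2G(\sigma_n,\nu^*)+G(\nu^*,\nu^*)$.
Because $\nu^*\in\mathcal{P}_+$ is an equilibrium measure, Lemma~\ref{lemma:frostman}(a), recorded in \eqref{frostman}, gives $G(x,\nu^*)=\Gamma(G)$ for every $x\in\XX$; integrating against any probability measure (here $\sigma_n$, or below $\sigma_k$, or a Dirac $\nu_k$) shows $G(\mu,\nu^*)=\Gamma(G)$ for all $\mu\in\mathcal{P}$. Hence $G(\sigma_n,\nu^*)=\Gamma(G)=G(\nu^*,\nu^*)$ and the expansion collapses to $G(\sigma_n-\nu^*,\sigma_n-\nu^*)=G(\sigma_n,\sigma_n)-\Gamma(G)$. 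So it suffices to show $G(n\sigma_n,n\sigma_n)\le nM+n(n-1)\Gamma(G)$, where $M=M(G)$ is the maximal (absolute) diagonal entry in the notation of \eqref{rowsumnorm}. Writing $n\sigma_n=\sum_{j=0}^{n-1}\nu_j$ with $\nu_j$ the Dirac at $a_j$, I split the double sum $\sum_{j,k=0}^{n-1}G(\nu_j,\nu_k)$ into the diagonal part $\sum_{j=0}^{n-1}G(a_j,a_j)\le nM$ and, using symmetry, the off-diagonal part $2\sum_{k=1}^{n-1}\sum_{j=0}^{k-1}G(\nu_j,\nu_k)=2\sum_{k=1}^{n-1}k\,G(\nu_k,\sigma_k)$. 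The defining property of a Leja sequence in the form \eqref{lejadefrecbis} gives $G(\nu_k,\sigma_k)=\min_{x\in\XX}G(x,\sigma_k)\le G(\nu^*,\sigma_k)=\Gamma(G)$, so the off-diagonal part is at most $2\Gamma(G)\sum_{k=1}^{n-1}k=n(n-1)\Gamma(G)$. Adding the two contributions and dividing by $n^2$ yields \eqref{mmdist}.

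For the second assertion, I would use that $\XX$ is finite, so $\mathcal{P}$ is a compact subset of $\RR^{\XX}$, weak-star convergence coincides with coordinatewise convergence, and $\nu\mapsto G(\nu-\nu^*,\nu-\nu^*)$ is a continuous quadratic function. If a subsequence $\sigma_{n_j}$ converges weak-star to $\mu$, then $\mu\in\mathcal{P}$, and letting $j\to\infty$ in \eqref{mmdist} (whose right side tends to $0$) gives $G(\mu-\nu^*,\mu-\nu^*)=0$. Expanding $G(\mu,\mu)=G(\mu-\nu^*,\mu-\nu^*)+2G(\mu-\nu^*,\nu^*)+G(\nu^*,\nu^*)$ and noting that $G(\mu,\nu^*)=\Gamma(G)$ makes the cross term vanish, I conclude $G(\mu,\mu)=\Gamma(G)=\min_{\nu\in\mathcal{P}}G(\nu,\nu)$, i.e.\ $\mu$ is an equilibrium measure.

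I do not anticipate a genuine obstacle here; the argument is a clean application of the Frostman identity and the Leja recursion. The one point requiring care is the bookkeeping in the telescoping split of the double sum (the factor $k$ in $\sum_{j=0}^{k-1}G(a_j,a_k)=kG(\nu_k,\sigma_k)$, and the factor $2$ from symmetry), together with checking that every invocation of $G(\cdot,\nu^*)=\Gamma(G)$ is licensed by the standing hypothesis $\nu^*\in\mathcal{P}_+$, so that $\supp(\nu^*)=\XX$ and \eqref{frosmaneq} applies at every vertex.
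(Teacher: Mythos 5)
Your proposal is correct and follows essentially the same route as the paper's proof: the same diagonal/off-diagonal split of $\sum_{j,k}G(\nu_j,\nu_k)$, the same use of the Leja recursion \eqref{lejadefrecbis} combined with the Frostman identity \eqref{frostman} to bound $G(\nu_k,\sigma_k)\le G(\nu^*,\sigma_k)=\Gamma(G)$, and the same collapse of the cross term $G(\sigma_n,\nu^*)=\Gamma(G)$ (you merely perform this collapse first rather than last). Your identification of $M$ as $M(G)=\max_x|G(x,x)|$ and your more explicit compactness argument for the weak-star limit are both consistent with the paper.
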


\begin{proof}\ 
 We have for $n\ge 2$,
\be\label{pf1eqn1}
\begin{aligned}
G(\sigma_n,\sigma_n)&=\frac{1}{n^2}\sum_{j,k=0}^{n-1}G(\nu_j,\nu_k)=\frac{1}{n^2}\sum_{j=0}^{n-1}G(\nu_j,\nu_j)+\frac{2}{n^2}\sum_{0\le j<k\le n-1}G(\nu_j,\nu_k)\\
&=\frac{1}{n^2}\sum_{j=0}^{n-1}G(\nu_j,\nu_j)+\frac{2}{n^2}\sum_{k=1}^{n-1} G\left(\sum_{j=0}^{k-1}\nu_j,\nu_k\right)=\frac{1}{n^2}\sum_{j=0}^{n-1}G(\nu_j,\nu_j)+\frac{2}{n^2}\sum_{k=1}^{n-1} kG(\sigma_k,\nu_k).
\end{aligned}
\ee
In view of the definition \eqref{lejadefrecbis} of Leja points,
\be\label{pf1eqn3}
G(\sigma_k,\nu_k)=G(\nu_k,\sigma_k)\le G(x,\sigma_k), \qquad x\in\XX.
\ee
Integrating both sides with respect to $\nu^*$ and using \eqref{frostman}, we get for all $k\in\ZZ_+$,
$$
G(\sigma_k,\nu_k)\le G(\sigma_k,\nu^*)=G(\nu^*,\nu^*).
$$
Therefore, \eqref{pf1eqn1} leads to
\be\label{pf1eqn2}
G(\sigma_n,\sigma_n)\le M/n+\frac{2}{n^2}G(\nu^*,\nu^*)\sum_{k=1}^{n-1} k =G(\nu^*,\nu^*)+(M-G(\nu^*,\nu^*))/n.
\ee
In view of \eqref{frostman}, 
$$
G(\sigma_n-\nu^*, \sigma_n-\nu^*)=G(\sigma_n,\sigma_n)+G(\nu^*,\nu^*)-2G(\sigma_n,\nu^*)=G(\sigma_n,\sigma_n)-G(\nu^*,\nu^*).
$$
Therefore, \eqref{pf1eqn2} implies \eqref{mmdist}. 
In turn, if $\{\sigma_n\}_{n\in\Lambda}$ be any subsequence of $\{\sigma_n\}$ converging weak-star to a probability measure $\sigma$, then \eqref{mmdist} shows that $G(\sigma-\nu^*, \sigma-\nu^*)=0$. 
Therefore, $\sigma$ is also an equilibrium measure.
\end{proof}

We note a couple of corollaries of the proof of Theorem~\ref{theo:lejaconverge}.
\begin{cor}\label{cor:lejasummability}
For $n\ge 2$,
\be\label{lejasummability}
\frac{n}{n-1}G(\nu^*,\nu^*)-M/n \le \frac{2}{n(n-1)}\sum_{k=1}^{n-1} kG(\nu_k,\sigma_k)\le G(\nu^*,\nu^*).
\ee
\end{cor}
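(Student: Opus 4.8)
The plan is to extract both inequalities directly from the chain of estimates established in the proof of Theorem~\ref{theo:lejaconverge}, rather than starting afresh. The key intermediate identity is the last line of \eqref{pf1eqn1}, which after multiplying through by $n^2$ reads
\be
n^2 G(\sigma_n,\sigma_n)=\sum_{j=0}^{n-1}G(\nu_j,\nu_j)+2\sum_{k=1}^{n-1} kG(\sigma_k,\nu_k).
\ee
Isolating the double-indexed sum gives
\be
\frac{2}{n(n-1)}\sum_{k=1}^{n-1} kG(\nu_k,\sigma_k)=\frac{n}{n-1}G(\sigma_n,\sigma_n)-\frac{1}{n(n-1)}\sum_{j=0}^{n-1}G(\nu_j,\nu_j),
\ee
using $G(\sigma_k,\nu_k)=G(\nu_k,\sigma_k)$ by symmetry of $G$.

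For the \textbf{upper bound}, I would invoke the estimate $G(\sigma_k,\nu_k)\le G(\nu^*,\nu^*)$ valid for every $k\in\ZZ_+$, which was derived in the proof of Theorem~\ref{theo:lejaconverge} by integrating \eqref{pf1eqn3} against $\nu^*$ and applying \eqref{frostman}. Summing $kG(\sigma_k,\nu_k)\le kG(\nu^*,\nu^*)$ over $k=1,\dots,n-1$ and dividing by $\binom{n}{2}$ yields exactly $\frac{2}{n(n-1)}\sum_{k=1}^{n-1}kG(\nu_k,\sigma_k)\le G(\nu^*,\nu^*)$, since $\frac{2}{n(n-1)}\sum_{k=1}^{n-1}k=1$.

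For the \textbf{lower bound}, I would start from the displayed identity above and bound its right-hand side from below. First, $\frac{1}{n(n-1)}\sum_{j=0}^{n-1}G(\nu_j,\nu_j)\le \frac{n}{n(n-1)}M=\frac{M}{n-1}$, using $|G(\nu_j,\nu_j)|=|G(a_j,a_j)|\le M=M(G)$. Second, and this is the one place where I expect to need a little care, one needs a lower bound on $G(\sigma_n,\sigma_n)$; the natural choice is $G(\sigma_n,\sigma_n)\ge G(\nu^*,\nu^*)=\Gamma(G)$, which follows from \eqref{capacitydef} since $\sigma_n\in\mathcal{P}$ and $\Gamma(G)$ is the minimum of $G(\nu,\nu)$ over all of $\mathcal{P}$. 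Combining,
\be
\frac{2}{n(n-1)}\sum_{k=1}^{n-1} kG(\nu_k,\sigma_k)\ge \frac{n}{n-1}G(\nu^*,\nu^*)-\frac{M}{n-1}\ge \frac{n}{n-1}G(\nu^*,\nu^*)-\frac{M}{n},
\ee
where the last step is a crude weakening ($M/(n-1)$ versus $M/n$) that matches the stated form — I would double-check whether the paper actually wants $M/(n-1)$ here, but in any case the displayed inequality is implied. The main (minor) obstacle is simply being careful about the $n$ versus $n-1$ bookkeeping and confirming that the term $\frac{1}{n(n-1)}\sum_j G(\nu_j,\nu_j)$ is controlled by $M$ with the right constant; everything else is a direct rearrangement of \eqref{pf1eqn1}–\eqref{pf1eqn2}.
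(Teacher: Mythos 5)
Your approach is exactly the one the paper intends: Corollary~\ref{cor:lejasummability} is meant to be read off from the identity \eqref{pf1eqn1} together with the bound $G(\sigma_k,\nu_k)\le G(\nu^*,\nu^*)$ obtained in the proof of Theorem~\ref{theo:lejaconverge}, and your upper bound is a correct and clean instance of this (using $\frac{2}{n(n-1)}\sum_{k=1}^{n-1}k=1$). The problem is the last step of your lower bound. From the rearranged identity you correctly obtain
$$
\frac{2}{n(n-1)}\sum_{k=1}^{n-1}kG(\nu_k,\sigma_k)\;\ge\;\frac{n}{n-1}G(\nu^*,\nu^*)-\frac{M}{n-1},
$$
using $G(\sigma_n,\sigma_n)\ge\Gamma(G)$ and $\sum_{j=0}^{n-1}G(a_j,a_j)\le nM$. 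But the step $\frac{n}{n-1}G(\nu^*,\nu^*)-\frac{M}{n-1}\ge \frac{n}{n-1}G(\nu^*,\nu^*)-\frac{M}{n}$ is reversed: since $M=\max_{x}|G(x,x)|\ge 0$ and $\frac{1}{n-1}>\frac{1}{n}$, one has $-\frac{M}{n-1}\le -\frac{M}{n}$, so what you have proved is \emph{weaker} than the stated bound, not a "crude weakening" of it. The missing amount, $M\bigl(\frac{1}{n-1}-\frac{1}{n}\bigr)=\frac{M}{n(n-1)}$, cannot be recovered by this argument, because $\sum_{j=0}^{n-1}G(a_j,a_j)$ has $n$ terms and can genuinely equal $nM$, and $nM/(n(n-1))=M/(n-1)$.

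That said, your instinct to double-check the $M/(n-1)$ versus $M/n$ bookkeeping was the right one: the natural derivation --- the same one that produces \eqref{pf1eqn2} --- yields $M/(n-1)$, so the discrepancy lies in the statement of the corollary rather than in your method. The version with $M/(n-1)$ is the one your argument establishes, and it still supports the only use made of the corollary, in the proof of Theorem~\ref{theo:erdosturanimproved}: applied with $n+1$ in place of $n$ and using $G(\nu^*,\nu^*)\ge 0$, it gives $\sum_{k=1}^{n}kG(\nu_k,\sigma_k)\ge \frac{n(n+1)}{2}\bigl(G(\nu^*,\nu^*)-\frac{M}{n}\bigr)$, which propagates through that proof with only a change in the absolute constant in \eqref{erdosturanimproved}. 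So: same route as the paper, correct upper bound, but the lower bound as you wrote it does not prove the displayed statement --- either restate the corollary with $M/(n-1)$ or supply an argument for the extra $\frac{M}{n(n-1)}$, which the present computation does not give.
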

We formulate the next corollary as a theorem in its own right.
\begin{theorem}\label{theo:potentialbd}
Let $G(\nu^*,\nu^*)\ge 0$. Then for all $x\in\XX$, $n\ge 2$,
\be\label{potentialbd1}
G(\nu^*,\nu^*)-M/n \le \frac{2}{n(n-1)}\sum_{k=1}^{n-1} kG(x,\sigma_k)\le \frac{2\tn G\tn}{n}\le G(\nu^*,\nu^*)+\frac{2\tn G\tn}{n}.
\ee
In particular,
\be\label{potentialbd}
\max_{x\in\XX}\left|\frac{2}{n(n-1)}\sum_{j=0}^{n-1}(n-j)G(x,\nu_j)-G(x,\nu^*)\right|\le \frac{2\tn G\tn}{n}.
\ee
\end{theorem}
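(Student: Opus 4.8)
The plan is to extract two one-sided bounds from the computation already carried out in the proof of Theorem~\ref{theo:lejaconverge}, and then assemble them. First I would establish the key lower bound $\frac{2}{n(n-1)}\sum_{k=1}^{n-1} k\,G(x,\sigma_k)\ge G(\nu^*,\nu^*)-M/n$. This is nothing but the left inequality of Corollary~\ref{cor:lejasummability}, combined with the defining minimality property \eqref{lejadefrecbis} of the Leja points: for each $k$ we have $G(\nu_k,\sigma_k)=\min_{y\in\XX}G(y,\sigma_k)\le G(x,\sigma_k)$, so replacing $G(\nu_k,\sigma_k)$ by $G(x,\sigma_k)$ in the weighted sum only increases it. Thus the left half of \eqref{lejasummability} yields the desired lower bound uniformly in $x$.

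Next I would establish the upper bound $\frac{2}{n(n-1)}\sum_{k=1}^{n-1} k\,G(x,\sigma_k)\le \frac{2\tn G\tn}{n}$. Here the point is simply that $G\ge 0$ entrywise is not even needed in this exact form — rather, $|G(x,\sigma_k)|=|(G\sigma_k)(x)|\le \tn G\tn\,\|\sigma_k\|_1=\tn G\tn$ since $\sigma_k$ is a probability measure. Then $\frac{2}{n(n-1)}\sum_{k=1}^{n-1} k\,G(x,\sigma_k)\le \frac{2\tn G\tn}{n(n-1)}\sum_{k=1}^{n-1} k=\frac{2\tn G\tn}{n(n-1)}\cdot\frac{n(n-1)}{2}=\tn G\tn\le \frac{2\tn G\tn}{n}$ for $n\le 2$... but for general $n$ the clean bound is $\frac{2\tn G\tn}{n(n-1)}\cdot\frac{(n-1)n}{2}=\tn G\tn$; I would keep this as the honest estimate and note that for $n\ge 2$ one has $\tn G\tn\le \frac{2\tn G\tn}{n}$ only when $n\le 2$, so the statement as written should really be read with the middle expression bounded by $\tn G\tn$ — I will present the argument with the sharp constant and then relax it. (In fact $\sum_{k=1}^{n-1}k=\binom n2$ gives exactly the factor that cancels, so the clean bound is $\tn G\tn$, and a fortiori $\le G(\nu^*,\nu^*)+2\tn G\tn/n$ using $G(\nu^*,\nu^*)\ge0$.) The last inequality in \eqref{potentialbd1}, namely $\frac{2\tn G\tn}{n}\le G(\nu^*,\nu^*)+\frac{2\tn G\tn}{n}$, is immediate from the hypothesis $G(\nu^*,\nu^*)\ge 0$.

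Finally, for \eqref{potentialbd}, I would rewrite the weighted Leja average in terms of $\sigma_k$: the telescoping/Abel-summation identity $\frac{2}{n(n-1)}\sum_{j=0}^{n-1}(n-j)G(x,\nu_j)=\frac{2}{n(n-1)}\sum_{k=1}^{n-1} k\,G(x,\sigma_k)$ holds because $\sum_{k=1}^{n-1}k\sigma_k=\sum_{k=1}^{n-1}\sum_{j=0}^{k-1}\nu_j=\sum_{j=0}^{n-1}(n-1-j)\nu_j$, and after including the $j=n-1$ term with coefficient and rechecking indices one gets the stated coefficient $(n-j)$ — I will carry out this bookkeeping carefully, as it is the one place a stray $\pm1$ can creep in. Then subtract $G(x,\nu^*)$ from \eqref{potentialbd1}: since $G(\nu^*,\nu^*)=G(x,\nu^*)$ for all $x\in\XX$ by \eqref{frostman}, both the lower bound $G(\nu^*,\nu^*)-M/n$ and the upper bound $G(\nu^*,\nu^*)+2\tn G\tn/n$ become $-M/n$ and $+2\tn G\tn/n$ respectively after the subtraction, and using $M=M(G)\le \tn G\tn$ (from \eqref{rowsumnorm}) collapses both sides to the symmetric bound $2\tn G\tn/n$. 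The main obstacle I anticipate is purely clerical: getting the Abel-summation index shift between $(n-j)G(x,\nu_j)$ and $k\,G(x,\sigma_k)$ exactly right, and making sure the constant in the second inequality of \eqref{potentialbd1} is stated in a form consistent with $\sum_{k=1}^{n-1}k=\binom{n}{2}$; everything else is a direct appeal to \eqref{lejadefrecbis}, \eqref{frostman}, Corollary~\ref{cor:lejasummability}, and the elementary bound $\|G\sigma_k\|_\infty\le\tn G\tn$.
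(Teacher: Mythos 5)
Your lower bound (the left half of Corollary~\ref{cor:lejasummability} combined with the minimality property \eqref{lejadefrecbis}), your Abel--summation reduction of \eqref{potentialbd} to \eqref{potentialbd1}, and your final step using $G(x,\nu^*)=G(\nu^*,\nu^*)$ together with $M\le \tn G\tn$ all coincide with the paper's argument. The genuine gap is in the middle inequality of \eqref{potentialbd1}, and you have misdiagnosed it as a defect of the statement rather than of your estimate. Bounding each summand by $|G(x,\sigma_k)|\le \tn G\tn\,\|\sigma_k\|_1=\tn G\tn$ throws away a factor of $k$ and, as you yourself compute, yields only $\frac{2}{n(n-1)}\sum_{k=1}^{n-1}kG(x,\sigma_k)\le \tn G\tn$, which misses the claimed $2\tn G\tn/n$ by a factor of order $n$. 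Your fallback step, that $\tn G\tn\le G(\nu^*,\nu^*)+2\tn G\tn/n$ ``a fortiori from $G(\nu^*,\nu^*)\ge 0$,'' is false in general (take $G(\nu^*,\nu^*)$ near $0$ and $n$ large), so with your estimates neither the middle inequality of \eqref{potentialbd1} nor the upper half of \eqref{potentialbd} is established.

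The missing idea is the paper's estimate \eqref{pf4eqn2}: write $G(x,\sigma_k)=\frac1k\sum_{j=0}^{k-1}G(x,a_j)$ and control the \emph{entire} sum $\sum_{j=0}^{k-1}|G(x,a_j)|$ at once by the row sum $\sum_{y\in\XX}|G(x,y)|\le \tn G\tn$, the $a_j$ being vertices of $\XX$ (when the first $k$ Leja points are distinct this gives $G(x,\sigma_k)\le \tn G\tn/k$; the paper's \eqref{pf4eqn2} allows an extra factor of $2$). The point is that $G(x,\sigma_k)$ decays like $1/k$ rather than being merely bounded, so that
$$
\frac{2}{n(n-1)}\sum_{k=1}^{n-1}kG(x,\sigma_k)\le \frac{2}{n(n-1)}\sum_{k=1}^{n-1}k\cdot\frac{\tn G\tn}{k}=\frac{2}{n(n-1)}\,(n-1)\,\tn G\tn=\frac{2\tn G\tn}{n}.
$$
Replacing your termwise bound $|G(x,\sigma_k)|\le\tn G\tn$ by this row-sum bound closes the gap; the remainder of your plan (including the index bookkeeping you flag, which is indeed the one place a stray $\pm 1$ can appear) then goes through as written.
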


\begin{proof}\ 
The first estimate in \eqref{potentialbd1} follows from the first estimate in \eqref{lejasummability} and \eqref{pf1eqn3}. 
The second estimate in \eqref{potentialbd1} follows from the fact  that
\be\label{pf4eqn2}
G(x,\sigma_k)=\frac{1}{k}\sum_{j=0}^{k-1}G(x,\nu_k)=\frac{1}{k}\sum_{j=0}^{k-1}G(x,a_k)\le \frac{1}{k}\sum_{j=0}^{k-1}|G(x,a_k)|\le \frac{2\tn G\tn}{k}.
\ee
The final inequality in \eqref{potentialbd1} following from the fact that $G(\nu^*,\nu^*) \ge 0$.
We observe now that 
\be\label{pf4eqn1}
\sum_{k=1}^{n-1}kG(x,\sigma_k)=\sum_{k=0}^{n-1}kG(x,\sigma_k)=\sum_{k=0}^{n-1}\sum_{j=0}^{n-1}G(x,\nu_j)=\sum_{j=0}^{n-1}(n-j)G(x,\nu_j).
\ee
Therefore, \eqref{potentialbd} follows from \eqref{potentialbd1} and the fact that $G(x,\nu^*)=G(\nu^*,\nu^*)$ for all $x\in\XX$.
\end{proof}

\begin{theorem}\label{theo:erdosturanimproved}
If $G(x,y)\ge 0$ for all $x,y\in\XX$, we have for $n\ge 2$,
\be\label{erdosturanimproved}
\max_{x\in\XX}|G(x,\nu^*)-G(x,\sigma_n)|\le \frac{3\tn G\tn}{n+1}.
\ee
\end{theorem}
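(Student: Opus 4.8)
\textsc{Proof proposal.} The plan is to prove the two one-sided bounds $G(x,\sigma_n)-G(x,\nu^*)\le 3\tn G\tn/(n+1)$ and $G(x,\nu^*)-G(x,\sigma_n)\le 3\tn G\tn/(n+1)$ for each fixed $x\in\XX$, and then take the maximum over $x$. First I would record that $G(x,\nu^*)$ is independent of $x$: since $\nu^*\in\mathcal P_+$ has support $S^*=\XX$, Lemma~\ref{lemma:frostman}(a) gives $G(x,\nu^*)=\Gamma(G)$ for all $x\in\XX$, and moreover $\Gamma(G)=G(\nu^*,\nu^*)\ge 0$ because $G$ has non-negative entries and $\nu^*$ is a probability measure. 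For the upper inequality I would simply invoke the estimate $G(x,\sigma_n)\le 2\tn G\tn/n$ already obtained inside the proof of Theorem~\ref{theo:potentialbd} (equation~\eqref{pf4eqn2}): together with $\Gamma(G)\ge 0$ this yields $G(x,\sigma_n)-G(x,\nu^*)\le G(x,\sigma_n)\le 2\tn G\tn/n\le 3\tn G\tn/(n+1)$, the last step because $2(n+1)\le 3n$ for $n\ge 2$.

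The lower inequality is the substantive part. The idea is to play the equal-weight average $\sigma_n=\frac1n\sum_{j<n}\nu_j$ against the Ces\`aro-weighted average $W_n(x):=\frac{2}{n(n-1)}\sum_{j=0}^{n-1}(n-j)G(x,\nu_j)$ controlled by the summability estimate~\eqref{potentialbd}, which gives $W_n(x)\ge G(x,\nu^*)-2\tn G\tn/n$. Writing
\[
W_n(x)-G(x,\sigma_n)=\sum_{j=0}^{n-1}\Bigl(\tfrac{2(n-j)}{n(n-1)}-\tfrac1n\Bigr)G(x,\nu_j)=\sum_{j=0}^{n-1}\tfrac{n+1-2j}{n(n-1)}\,G(x,\nu_j),
\]
the Ces\`aro coefficient exceeds the uniform one $1/n$ exactly on the early block $j<(n+1)/2$. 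Discarding the nonpositive contribution of the late block and bounding the partial sum $\sum_{j<(n+1)/2}G(x,\nu_j)$ by $2\tn G\tn$ (once more via $G(x,\sigma_k)\le 2\tn G\tn/k$, conveniently packaged as an Abel summation against the decreasing weights $\tfrac{n+1-2j}{n(n-1)}$) shows $W_n(x)-G(x,\sigma_n)=O(\tn G\tn/n)$; hence $G(x,\sigma_n)\ge W_n(x)-O(\tn G\tn/n)\ge G(x,\nu^*)-O(\tn G\tn/n)$.

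The step I expect to be the main obstacle is the bookkeeping in this last paragraph: the crudest form of the Ces\`aro comparison I sketched gives the weaker constant $4\tn G\tn/(n-1)$ rather than $3\tn G\tn/(n+1)$, so to reach the stated bound one must run the summability estimate~\eqref{potentialbd} a little more tightly, or -- probably cleaner -- keep (rather than discard) the beneficial contribution of the later Leja points $j>(n+1)/2$ and balance it against the excess weight carried by the early block. Everything else is elementary manipulation of the facts already assembled in Section~\ref{bhag:proofs}.
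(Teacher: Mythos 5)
The first half of your argument (the bound $G(x,\sigma_n)-G(x,\nu^*)\le 3\tn G\tn/(n+1)$ from $G(x,\sigma_n)\le 2\tn G\tn/n$ and $\Gamma(G)\ge 0$) is correct and is exactly how the paper closes that direction. The genuine gap is in the other direction, and you have located it yourself: the Ces\`aro-versus-uniform comparison, run through Abel summation against the partial-sum bound $kG(x,\sigma_k)\le 2\tn G\tn$, delivers only $4\tn G\tn/(n-1)$, and neither of your suggested repairs can close the factor. The ``beneficial contribution of the later points'' is a sum of negative coefficients times the nonnegative quantities $G(x,a_j)$; to extract anything from it you would need a pointwise \emph{lower} bound on $G(x,a_j)$ for large $j$, and none is available (for a generic $x$ these entries may vanish). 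Nor can \eqref{potentialbd} be ``run more tightly'': once you fix $x$ and retain only the averaged inequality \eqref{potentialbd} together with $kG(x,\sigma_k)\le 2\tn G\tn$, those two constraints are already consistent with $G(x,\nu^*)-G(x,\sigma_n)$ being of order $4\tn G\tn/n$ (put essentially all of the mass of the partial sums into the first step, so that $W_n(x)\approx 4\tn G\tn/(n-1)$ while $G(x,\sigma_n)\approx 2\tn G\tn/n$). So the route through $W_n(x)$ at a fixed arbitrary $x$ cannot reach the stated constant.

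The paper's proof avoids this loss by invoking the Leja minimality twice more, in a Tauberian fashion. First, from $\ell G(\nu_\ell,\sigma_\ell)=G(a_\ell,a_{\ell-1})+(\ell-1)G(\nu_\ell,\sigma_{\ell-1})\ge(\ell-1)G(\nu_{\ell-1},\sigma_{\ell-1})$ (nonnegativity of $G$ plus minimality of $a_{\ell-1}$) it derives the near-monotonicity $G(\nu_{\ell-1},\sigma_{\ell-1})-G(\nu_\ell,\sigma_\ell)\le\tn G\tn\bigl(\tfrac{1}{\ell-1}-\tfrac{1}{\ell}\bigr)$, which after summation bounds every $G(\nu_k,\sigma_k)$ by $G(\nu_n,\sigma_n)$ plus a small error. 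Substituting this into the lower bound of Corollary~\ref{cor:lejasummability} for $\sum_k kG(\nu_k,\sigma_k)$ turns the averaged statement into a statement about the \emph{last} term: $G(\nu^*,\nu^*)-G(\nu_n,\sigma_n)\le(2M+\tn G\tn)/(n+1)\le 3\tn G\tn/(n+1)$. Only then does it pass to arbitrary $x$, using $G(\nu_n,\sigma_n)=\min_{x\in\XX}G(x,\sigma_n)$. The near-monotonicity holds for the diagonal sequence $\ell\mapsto G(\nu_\ell,\sigma_\ell)$ but not for $\ell\mapsto G(x,\sigma_\ell)$ at a fixed $x$, which is precisely why working pointwise in $x$ from the start, as you do, loses the constant. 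To repair your proof you would need to import this monotonicity step, at which point you have essentially reproduced the paper's argument.
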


\begin{proof}\ 
For $\ell\ge 2$, we have
$$
\begin{aligned}
 \ell G(\nu_\ell,\sigma_\ell)&=\sum_{j=0}^{\ell-1}G(a_\ell,a_j) = G(a_\ell,a_{\ell-1})+ \sum_{j=0}^{\ell-2}G(a_\ell,a_j)=G(a_\ell,a_{\ell-1})+(\ell-1)G(\nu_\ell,\sigma_{\ell-1})\\
 &\ge (\ell-1)G(\nu_{\ell-1},\sigma_{\ell-1})\ge \ell G(\nu_{\ell-1},\sigma_{\ell-1})-\frac{1}{\ell-1}\sum_{j=0}^{\ell-2}G(a_{\ell-1},a_j) \ge \ell G(\nu_{\ell-1},\sigma_{\ell-1})-\frac{\tn G\tn}{\ell-1};
 \end{aligned}
 $$
i.e.,
$$
G(\nu_{\ell-1},\sigma_{\ell-1})-G(\nu_\ell,\sigma_\ell)\le \tn G\tn\left(\frac{1}{\ell-1}-\frac{1}{\ell}\right).
$$
Therefore, for any $k$ with $2\le k\le n$, a summation in the above inequality leads to
$$
G(\nu_k,\sigma_k)-G(\nu_n,\sigma_n)\le \tn G\tn\frac{n-k}{nk}.
$$
Since $G(\nu^*,\nu^*)\ge 0$, Corollary~\ref{cor:lejasummability} now shows that
$$
\begin{aligned}
\frac{n(n+1)}{2}\left(G(\nu^*,\nu^*)-\frac{M}{n+1}\right)&\le 
\sum_{k=1}^{n} kG(\nu_k,\sigma_k)\le \frac{n(n+1)}{2}G(\nu_n,\sigma_n)+ G(\nu_1,\sigma_1) +\frac{\tn G\tn}{n}\sum_{k=2}^{n}(n-k)\\
&=\frac{n(n+1)}{2}G(\nu_n,\sigma_n)+ G(\nu_1,\sigma_1) +\frac{\tn G\tn}{n}\frac{(n-1)(n-2)}{2}.
\end{aligned}
$$
Rearranging,
$$
G(\nu^*,\nu^*)-G(\nu_n,\sigma_n)\le \frac{M}{n+1}+\frac{2G(\nu_1,\sigma_1)}{n(n+1)}+\frac{\tn G\tn}{n}\frac{(n-1)(n-2)}{n(n+1)}.
$$
For $n\ge 2$, this gives 
$$
G(\nu^*,\nu^*)-G(\nu_n,\sigma_n)\le\frac{2M+\tn G\tn}{n+1}\le \frac{3\tn G\tn}{n+1}.
$$
For any $x\in \XX$, we have  $G(x,\nu^*)=G(\nu^*,\nu^*)$ and $G(\nu_n,\sigma_n)\le G(x,\sigma_n)$. Therefore,
$$
G(x,\nu^*)-G(x,\sigma_n)\le \frac{3\tn G\tn}{n+1}.
$$
In the reverse direction, we recall \eqref{pf4eqn2}.
Together with \eqref{potentialbd1},
this leads to \eqref{erdosturanimproved}.
\end{proof}

\noindent
\textsc{Proof of Theorem~\ref{theo:quadrature}.}\\

The estimate \eqref{quadest} follows easily from \eqref{erdosturanimproved} and  the observation that
$$
\int_\XX fd\nu^*-\frac{1}{n}\sum_{k=0}^{n-1}f(a_k)=\int_\XX fd\nu^*-\int_\XX fd\sigma_n =\int_\XX  \left\{\int_\XX G(x,y)d\nu^*(y)-\int_\XX G(x,y)d\sigma_n(y)\right\}d\mathcal{D}_G(f)(x).
$$
\qed

\section*{Acknowledgments}
The work of AC was supported in part by NSF DMS grants 2012266,  1819222, and Sage Foundation Grant 2196.
The work of HNM is supported in part
NSF DMS grant 2012355 and ARO grant W911NF2110218.
We thank  Professors Percus  at Claremont Graduate University for his help in securing the Proposition data set, which was sent to us by Dr. Linhong Zhu at USC Information Sciences Institute in Marina Del Ray, California.

\bibliographystyle{abbrv}
\bibliography{lejapts}
\end{document}